\newtheorem{theorem}{Theorem}
\newcommand{\method}{GSMRL\xspace}
\newcommand{\KL}{D_{\mathrm{KL}}}
\icmltitlerunning{Active Feature Acquisition with Generative Surrogate Models}
\begin{document}

\twocolumn[
\icmltitle{Active Feature Acquisition with Generative Surrogate Models}

% It is OKAY to include author information, even for blind
% submissions: the style file will automatically remove it for you
% unless you've provided the [accepted] option to the icml2021
% package.

% List of affiliations: The first argument should be a (short)
% identifier you will use later to specify author affiliations
% Academic affiliations should list Department, University, City, Region, Country
% Industry affiliations should list Company, City, Region, Country

% You can specify symbols, otherwise they are numbered in order.
% Ideally, you should not use this facility. Affiliations will be numbered
% in order of appearance and this is the preferred way.
\icmlsetsymbol{equal}{*}

\begin{icmlauthorlist}
\icmlauthor{Yang Li}{unc}
\icmlauthor{Junier B. Oliva}{unc}
\end{icmlauthorlist}

\icmlaffiliation{unc}{Department of Computer Science, University of North Carolina at Chapel Hill, NC, USA}

\icmlcorrespondingauthor{Yang Li}{yangli95@cs.unc.edu}

% You may provide any keywords that you
% find helpful for describing your paper; these are used to populate
% the "keywords" metadata in the PDF but will not be shown in the document
\icmlkeywords{Machine Learning, ICML}

\vskip 0.3in
]

% this must go after the closing bracket ] following \twocolumn[ ...

% This command actually creates the footnote in the first column
% listing the affiliations and the copyright notice.
% The command takes one argument, which is text to display at the start of the footnote.
% The \icmlEqualContribution command is standard text for equal contribution.
% Remove it (just {}) if you do not need this facility.

\printAffiliationsAndNotice{}  % leave blank if no need to mention equal contribution
% \printAffiliationsAndNotice{\icmlEqualContribution} % otherwise use the standard text.

\begin{abstract}
Many real-world situations allow for the acquisition of additional relevant information when making an assessment with limited or uncertain data. However, traditional ML approaches either require all features to be acquired beforehand or regard part of them as missing data that cannot be acquired. In this work, we consider models that perform active feature acquisition (AFA) and query the environment for unobserved features to improve the prediction assessments at evaluation time. 
Our work reformulates the Markov decision process (MDP) that underlies the AFA problem as a generative modeling task and optimizes a policy via a novel model-based approach. We propose learning a generative surrogate model (GSM) that captures the dependencies among input features to assess potential information gain from acquisitions. The GSM is leveraged to provide intermediate rewards and auxiliary information to aid the agent navigate a complicated high-dimensional action space and sparse rewards.
Furthermore, we extend AFA in a task we coin \emph{active instance recognition} (AIR) for the unsupervised case where the target variables are the unobserved features themselves and the goal is to collect information for a particular instance in a cost-efficient way. Empirical results demonstrate that our approach achieves considerably better performance than previous state of the art methods on both supervised and unsupervised tasks.
\end{abstract}

\section{Introduction}
A typical machine learning paradigm for discriminative tasks is to learn the distribution of an output, $y$ given a complete set of features, $x \in \mathbb{R}^d$: $p(y \mid x)$. Although this paradigm is successful in a multitude of domains, it is incongruous with the expectations of many real-world intelligent systems in two key ways: first, it assumes that a complete set of features has been observed; second, as a consequence, it also assumes that no additional information (features) of an instance may be obtained at evaluation time. These assumptions often do not hold; human agents routinely reason over instances with incomplete data and decide when and what additional information to obtain. For example, consider a doctor diagnosing a patient. The doctor usually has not observed all possible measurements (such as blood samples, x-rays, etc.) for the patient. He/she is not forced to make a diagnosis based on the observed measurements; instead, he/she may dynamically decide to take more measurements to help determine the diagnosis. Of course, the next measurement to make (feature to observe), if any, will depend on the values of the already observed features; thus, the doctor may determine a different set of features to observe from patient to patient (instance to instance) depending on the values of the features that were observed. Hence, each patient will not have the same subset of features selected (as would be the case with typical feature selection). Furthermore, acquiring features typically involves some cost (in time, money and risk), and intelligent systems are expected to automatically balance the cost and improvement on performance.
%the return on improvement of the task performance. 
In order to more closely match the needs of many real-world applications, we propose an active feature acquisition (AFA) model that not only makes predictions with incomplete/missing features, but also determines the next feature that would be the most valuable to obtain for a particular instance.

\begin{figure*}
    \centering
    \begin{minipage}{0.44\linewidth}
    \centering
    \includegraphics[width=0.98\textwidth]{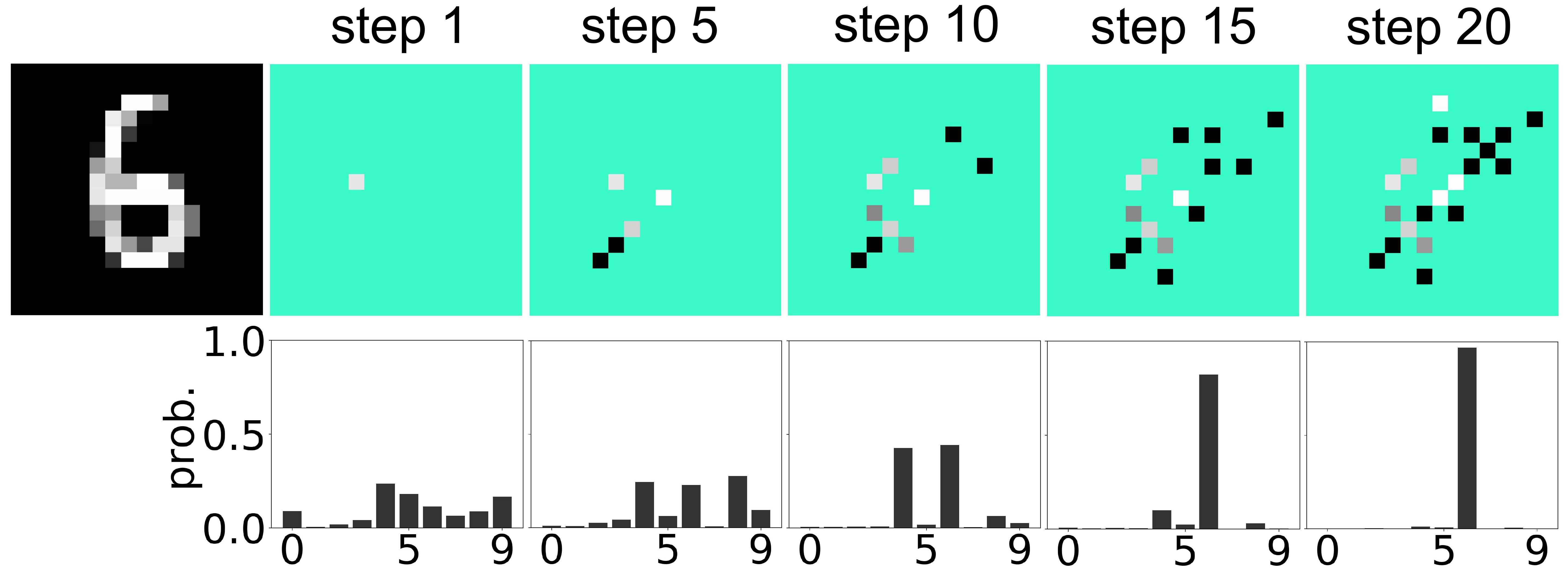}
    \vspace{-8pt}
    \caption{Active feature acquisition on MNIST. Example of our acquisition process (top) and the prediction probabilities (bottom). The green masks indicate the unobserved features.}
    \label{fig:mnist_cls_acq}
    \end{minipage}
    \hspace{35pt}
    \begin{minipage}{0.44\linewidth}
    \centering
    \includegraphics[width=0.98\textwidth]{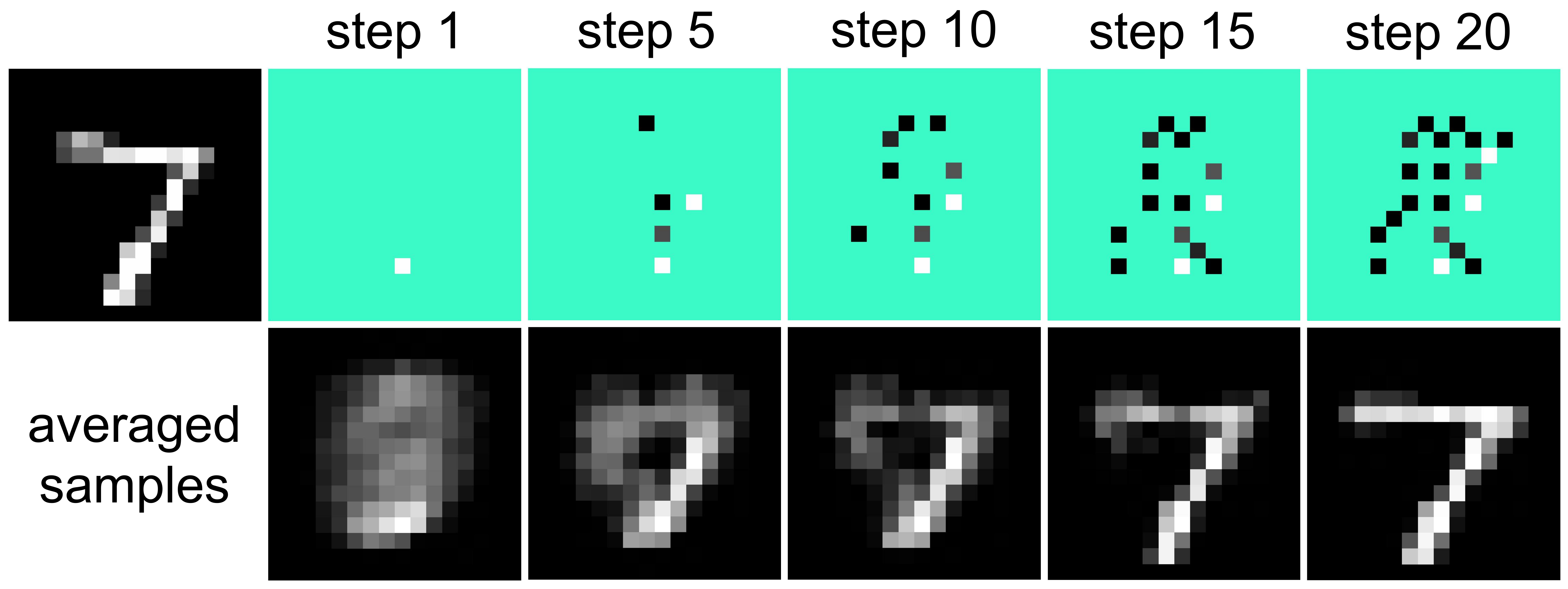}
    \vspace{-8pt}
    \caption{Active instance recognition on MNIST. Example of our acquisition process (top) and averaged inpaintings (bottom). The green masks indicate the unobserved features.}
    \label{fig:mnist_air_acq}
    \end{minipage}
    \vspace{-8pt}
\end{figure*}

As noted in \cite{shim2018joint}, the active feature acquisition problem may be formulated as a Markov decision process (MDP), where the state is the set of currently observed features and the action is the next feature to acquire. A special action indicates whether to stop the acquisition process and make a final prediction. After acquiring its value and paying the acquisition cost, the newly acquired feature is added to the observed subset and the agent proceeds to the next acquisition step. Once the agent decides to terminate the acquisition, it makes a final prediction based on the features acquired thus far. For example, in an image classification task (Fig.~\ref{fig:mnist_cls_acq}), the agent would dynamically acquire pixels until it is certain of the image class. The goal of the agent is to maximize the prediction performance while minimizing the acquisition cost.

The key insight of this work is that the dynamics model for the AFA MDP is based on the conditionals of the features: $p(x_j \mid x_o)$, where $x_j$ is an unobserved feature selected for acquisition and $x_o$ are the previously acquired features. Thus, we develop a model-based approach through generative modeling of \emph{all} conditional dependencies. Equipped with the surrogate model, our method, \emph{Generative Surrogate Models for RL} (\method), essentially combines model-free and model-based RL into a holistic framework.

\method rectifies several short-comings of a model-free scheme such as JAFA \cite{shim2018joint}. In the aforementioned MDP, the agent pays the acquisition cost at each acquisition step but only receives a reward about the prediction after completing the acquisition process. To reduce the sparsity of the rewards and simplify the credit assignment problem for potentially long episodes \citep{minsky1961steps,sutton1988learning}, we leverage a surrogate model to provide intermediate rewards
% We propose using the surrogate model to calculate intermediate rewards 
by assessing the information gain of a newly acquired feature, which quantifies how much our confidence about the prediction improves by acquiring this feature. 
%The surrogate model captures the arbitrary conditional distribution $p(y, x_u \mid x_o)$, where $y$ is the target variable and $u,o \subseteq \{1,\ldots,d\}$ are arbitrary subsets of all $d$-dimensional features. Note that the surrogate model must be able to capture arbitrary conditionals (for subsets $u,o$) since the acquired features will vary from instance to instance. We propose using the surrogate model to calculate intermediate rewards by assessing the information gain of the newly acquired feature, which quantifies how much our confidence about the prediction improves by acquiring this feature. 
%In addition to producing intermediate rewards, 
In addition to sparse rewards, an AFA agent must also navigate a complicated high-dimensional action space \cite{dulac2015deep}, and must manage multiple roles as it has to: implicitly model dependencies, perform a cost/benefit analysis, and act as a classifier. To lessen the burden, 
we also propose using the surrogate model to provide side information that assists the agent. 
The side information shall explicitly inform the agent of: 1) uncertainty and imputations for unobserved features; 2) an estimate of the expected information gain of future acquisitions; 3) uncertainty of the target output.
This allows the agent to easily assess its current uncertainty and helps the agent `look ahead' to expected outcomes from future acquisitions.

In this work, we also propose the first (to the best of our knowledge) unsupervised AFA task, which we coin \emph{active instance recognition} (AIR).
Here we consider the case where there is not a single target variable, but instead the target of interest may be the remaining unobserved features themselves.
That is, rather than reducing the uncertainty with respect to some desired output response (that cannot be directly queried and must be predicted), the task is to query as few features as possible that allows the agent to correctly uncover the remaining unobserved features. 
%Above we discussed AFA for supervised tasks, where the goal is to acquire new features to predict a target variable $y$. In some cases, however, there may not be a single target variable, but instead the target of interest may be the remaining unobserved features themselves.
%That is, rather than reduce the uncertainty with respect to some desired output response (that cannot be directly queried and must be predicted), we now propose \emph{active instance recognition} (AIR), where the task is to query as few features as possible that allows the agent to correctly uncover the remaining unobserved features. 
For example, in image data AIR, an agent queries new pixels until it can reliably uncover the remaining pixels (see Fig.~\ref{fig:mnist_air_acq}).
AIR is especially relevant in surveying tasks, which are broadly applicable across various domains and applications.
Most surveys aim to discover a broad set of underlying characteristics of instances (e.g., citizens in a census) using a limited number of queries (questions in the census form), which is at the core of AIR.
Policies for AIR would build a personalized subset of survey questions (for individual instances) that quickly uncovered the likely answers to all remaining questions.
%To adapt our \method~framework to AIR, we set the target variable $y$ equal to $x$ and modify the surrogate model accordingly.

Our contributions are as follows: 1) We reformulate the AFA problem as a generative modeling task and build surrogate models that capture the state transitions with arbitrary conditional distributions. 2) We develop methodology to leverage the surrogate model to provide intermediate rewards as training signals and to provide auxiliary information that assists the agent. Our framework represents a novel combination of model-free and model-based RL. 3) We propose the first unsupervised active feature acquisition task where the target variables are the unobserved features themselves. 4) We achieve state-of-the-art performance on both supervised and unsupervised tasks in the largest scale AFA study to date. 5) We open-source a standardized environment inheriting the OpenAI gym interfaces \citep{1606.01540} to assist future research on active feature acquisition. Code will be released upon publication.

\section{Methods}
In this section, we first describe our \method~framework for both active feature acquisition (AFA) and active instance recognition (AIR) problems. We then develop our RL algorithm and the corresponding surrogate models for different settings. We also introduce a special application that acquires features for time series data.

\subsection{AFA and AIR with \method}
Consider a discriminative task with features $x \in \mathbb{R}^d$ and target $y$. Instead of predicting the target by first collecting all the features, we perform a sequential feature acquisition process in which we start from an empty set of features and actively acquire more features. There is typically a cost associated with features and the goal is to maximize the task performance while minimizing the acquisition cost, i.e.,
\begin{equation}\label{eq:goal}
    \text{minimize}~ \mathcal{L}(\hat{y}(x_o), y) + \alpha \mathcal{C}(o),
\end{equation}
where $\mathcal{L}(\hat{y}(x_o), y)$ represents the loss function between the prediction $\hat{y}(x_o)$ and the target $y$. Note that the prediction is made with the acquired feature subset $x_o, o \subseteq \{1,\ldots,d\}$. Therefore the agent should be able to predict with arbitrary subsets. $\mathcal{C}(o)$ represents the cost of the acquired features $o$. The hyperparameter $\alpha$ controls the trade-off between prediction loss and acquisition cost. For unsupervised tasks, the target variable $y$ equals to $x$; that is, we acquire features actively to represent the instance with a selected subset. 

In order to solve the optimization problem in \eqref{eq:goal}, we formulate it as a Markov decision process as in \citep{zubek2004pruning,shim2018joint}:
\begin{equation}
\begin{aligned}
    &s = [o,x_o], \quad a \in u \cup \phi, \\ 
    &r(s,a) = -\mathcal{L}(\hat{y}, y) \mathbb{I}(a=\phi) - \alpha \mathcal{C}(a) \mathbb{I}(a \neq \phi).
\end{aligned}
\end{equation}
The state $s$ is the current acquired feature subset $o \subseteq \{1,\ldots,d\}$ and their values $x_o$. The action space contains the remaining candidate features $u=\{1,\ldots,d\} \setminus o$ and a special action $\phi$ that indicates the termination of the acquisition process. To optimize the MDP, a reinforcement learning agent acts based on the observed state and receives rewards from the environment. When the agent acquires a new feature $i$, the current state transits to a new state following
$o \xrightarrow{i} o \cup i, x_o \xrightarrow{i} x_o \cup x_i$,
and the reward is the negative acquisition cost of this feature. $x_i$ is obtained from the environment (i.e. we observe the true $i^\mathrm{th}$ feature value for the instance). 

\textbf{Feature Dependencies as Dynamics Model} A surprisingly unexplored property of the AFA MDP, and the driving observation to our work, is that the dynamics of the problem are dictated by \emph{conditional dependencies among the data's features}. That is, the state transitions are based on the conditionals: $p(x_j \mid x_o)$, where $x_j$ is an unobserved feature. 
Therefore we frame our approach according to the estimation of conditionals among features with generative models.
%Therefore,  we can build a generative model to simulate the dynamics. 
We build a surrogate model to learn the distribution $p(y, x_j \mid x_o)$, where $x_j$ and $x_o$ contain arbitrary features from $x$. We find that the most efficacious use of our generative surrogate model (see section \ref{sec:ablations}) is a hybrid model-based approach that utilizes intermediate rewards and side information stemming from dependencies.

\textbf{Intermediate Rewards} When the agent terminates the acquisition and makes a prediction, the reward equals to the negative prediction loss using current acquired features. Since the prediction is made at the end of acquisitions, the reward of the prediction is received only when the agent decides to terminate the acquisition process. This is a typical temporal credit assignment problem, which may affect the learning of the agent \citep{minsky1961steps,sutton1988learning}.
Given the surrogate model, we propose to remedy the credit assignment problem by providing intermediate rewards for each acquisition. Inspired by the information gain, the surrogate model assesses the intermediate reward for a newly acquired feature $i$ with
\begin{equation}\label{eq:info_gain}
    r_m(s,i) = H(y \mid x_o) - \gamma H(y \mid x_o, x_i),
\end{equation}
where $\gamma$ is a discount factor for the MDP. In appendix \ref{sec:invariance}, we show that our intermediate rewards will not change the optimal policy.

\textbf{Side Information} In addition to intermediate rewards, we propose using the surrogate model to also provide side information to assist the agent, which includes the current prediction and output likelihood, the possible values and corresponding uncertainties of the unobserved features, and the estimated utilities of the candidate acquisitions. The current prediction $\hat{y}$ and likelihood $p(y \mid x_o)$ inform the agent about its confidence, which can help the agent determine whether to stop the acquisition. The imputed values and uncertainties of the unobserved features give the agent the ability to look ahead into and future and guide its exploration. For example, if the surrogate model is very confident about the value of a currently unobserved feature, then acquiring it would be redundant. The utility of a feature $i$ is estimated by its \emph{expected} information gain to the target variable:
\begin{equation}\label{eq:utility}
\begin{aligned}
    \mathcal{U}_i &= H(y \mid x_o) - \mathbb{E}_{p(x_i \mid x_o)}H(y \mid x_i, x_o)\\
        &= H(x_i \mid x_o) - \mathbb{E}_{p(y \mid x_o)}H(x_i \mid y, x_o),
\end{aligned}
\end{equation}
where the surrogate model is used to estimate the expected entropies. The utility essentially quantifies the conditional mutual information $I(x_i; y \mid x_o)$ between each candidate feature and the target variable. A greedy policy can be easily built based on the utilities where the next feature to acquire is the one with maximum utility \citep{ma2018eddi,gong2019icebreaker}. Here, our agent takes the utilities as side information to help balance exploration and exploitation, and eventually learns a non-greedy policy.

\begin{algorithm}[tb]
\caption{Active Feature Acquisition with \method}
\label{alg:RL}
\begin{algorithmic}
\INPUT{pretrained surrogate model $M$; agent $\textit{agent}$; prediction model $f_\theta(\cdot)$; test dataset $D$ to be acquired}
\STATE{1. instantiate an environment: $\textit{env}=\text{Env}(D)$}
\STATE{2. $x_o$, o = \textit{env}.reset()}
\STATE{3. done = False; reward = 0}
\WHILE{not done}{
  \STATE{aux = $M$.query($x_o$, o)}
  \STATE{action = \textit{agent}.act($x_o$, o, aux)}
  \STATE{$r_m =$ $M$.reward($x_o$, $o$, action)}
  \STATE{$x_o$, o, done, r = \textit{env}.step(action)}
  \STATE{reward = reward + r + $r_m$}}
\ENDWHILE
\STATE{prediction = $\textit{agent}$.predict($x_o$, o, aux)}
\end{algorithmic}
\end{algorithm}

\textbf{Prediction Model} When the agent deems that acquisition is complete, it makes a final prediction based on the acquired features thus far. The final prediction may be made using the surrogate model, i.e., $p(y \mid x_o)$, but it might be beneficial to train predictions specifically based on the agent's own distribution of acquired features $o$, since the surrogate model is agnostic to the feature acquisition policy of the agent. Therefore, we build a prediction model $f_\theta(\cdot)$ that takes both the current state $x_o$ and the side information as inputs (i.e.~the same inputs as the policy). The prediction model can be trained simultaneously with the policy as an auxiliary task. Weight sharing between the policy and prediction function facilitates the learning of better representations. 
\begin{wrapfigure}{r}{0.48\linewidth}
    \centering
    \vspace{-5pt}
    \includegraphics[width=\linewidth]{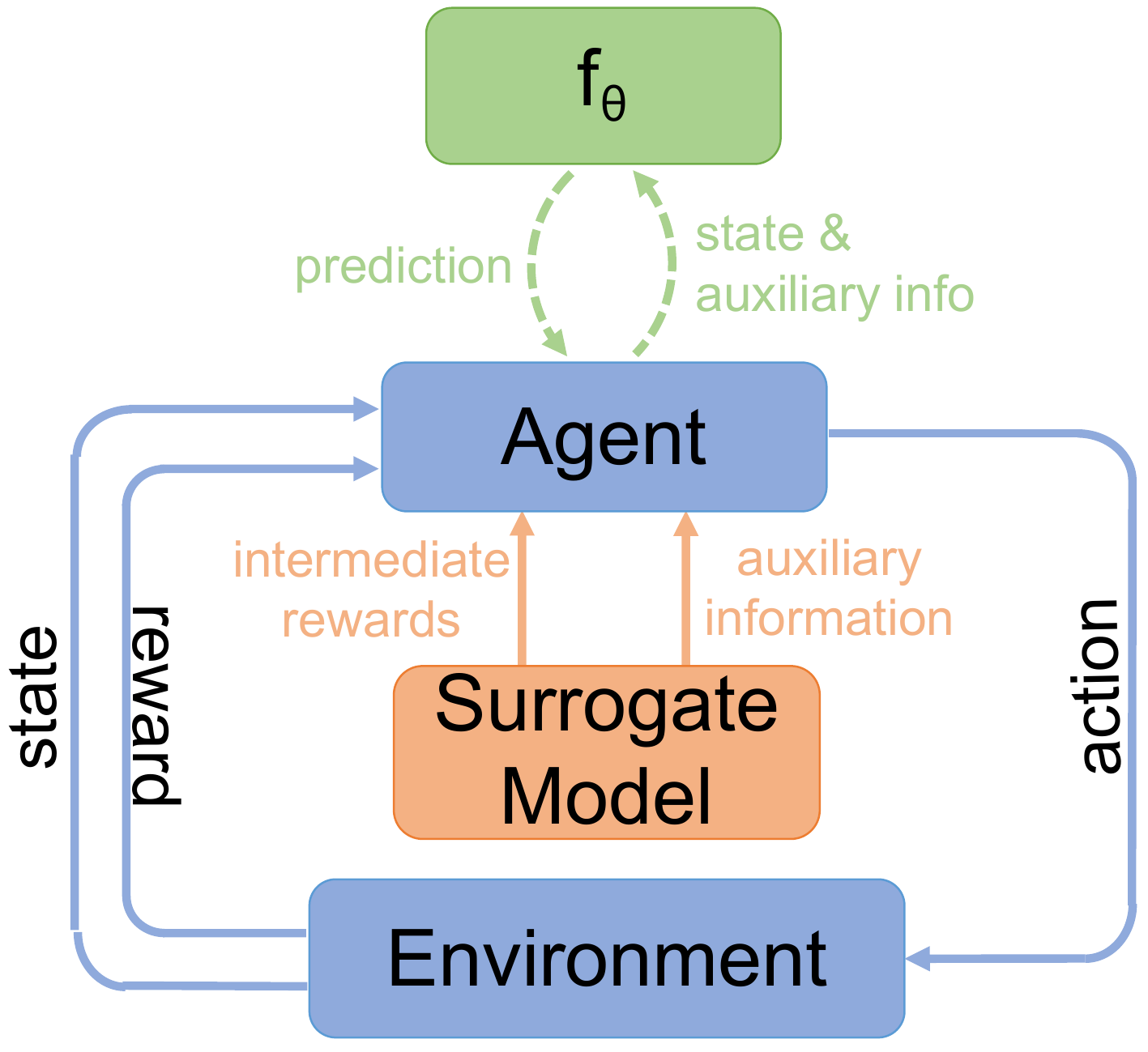}
    \vspace{-15pt}
    \caption{Illustration of our \method~framework with a prediction model $f_\theta$.}
    \label{fig:model}
    \vspace{-8pt}
\end{wrapfigure}
Given the two predictions from the surrogate model and the prediction model respectively, the final reward $-\mathcal{L}(\hat{y}, y)$ during training is the maximum one using either predictions. During test time, we choose one prediction based on validation performance. An illustration of our framework is presented in Fig.~\ref{fig:model}. Please refer to Algorithm~\ref{alg:RL} for pseudo-code of the acquisition process with our \method~framework. Please also see Algorithm~\ref{alg:RL_supp} in the appendix for a detailed version. We will expound on the surrogate models for different settings below.

\subsubsection{Surrogate Model for AFA}
As we mentioned above, the surrogate model learns the conditional distributions $p(y, x_j \mid x_o)$. Note that $x_o$ is an arbitrary subset of the features and $x_j$ is an arbitrary unobserved feature since the surrogate model must be able to assist arbitrary policies, and acquired features will vary from instance to instance. Thus, there are 
%$d \cdot 2^{d-1}$ 
an exponential number of different conditionals that the surrogate model must estimate for a $d$-dimensional feature space. Therefore, learning a separate model for each different conditional is intractable. Fortunately, \citet{ivanov2018variational} and \citet{li2019flow} have proposed models to learn arbitrary conditional distributions $p(x_u \mid x_o)$. They regard different conditionals as different tasks and train VAE and normalizing flow based generative models, respectively, in a multi-task fashion to capture the arbitrary conditionals with a unified model. 
%In this work, we leverage the ACFlow model \citep{li2019flow} and extend it to model the target variable $y$ as well.
In this work, we leverage arbitrary conditionals and extend them to model the target variable $y$ as well.
For continuous target variables, we concatenate them with the features, thus $p(y, x_j \mid x_o)$ can be directly modeled. For discrete target variables, where we have a mix of continuous features and discrete labels, we use Bayes' rule:
\begin{equation}\label{eq:discrete}
    p(y,x_j \mid x_o) = \frac{p(x_j \mid y, x_o)p(x_o \mid y)P(y)}{\sum_{y'}p(x_o \mid y')P(y')}.
\end{equation}
We employ a variant arbitrary conditioning model that conditions on the target $y$ to obtain the arbitrary conditional likelihoods $p(x_j \mid y, x_o)$ and $p(x_o \mid y)$ in \eqref{eq:discrete}.

Given a trained surrogate model, the prediction $p(y \mid x_o)$, the information gain in \eqref{eq:info_gain}, and the utilities in \eqref{eq:utility} can all be estimated using the arbitrary conditionals. For continuous target variables, the prediction can be estimated by drawing samples from $p(y \mid x_o)$, and we express their uncertainties using sample variances. We calculate the entropy terms in \eqref{eq:info_gain} with Monte Carlo estimations. The utility in \eqref{eq:utility} can be further simplified as
\begin{equation}\label{eq:cmi_continuous}
\begin{aligned}
    \mathcal{U}_i 
    &= \mathbb{E}_{p(y, x_i \mid x_o)} \log \frac{p(x_i, y \mid x_o)}{p(y \mid x_o)p(x_i \mid x_o)}\\ &= \mathbb{E}_{p(y,x_i \mid x_o)} \log \frac{p(y \mid x_i, x_o)}{p(y \mid x_o)}.
\end{aligned}
\end{equation}
We then perform a Monte Carlo estimation by sampling from $p(y, x_i \mid x_o)$. Note that $p(y \mid x_i, x_o)$ is evaluated on sampled $x_i$ rather than the true value, since we have not acquired its value yet.

For discrete target variables, we employ Bayes’ rule to make a prediction
\begin{equation}
\begin{aligned}
    P(y \mid x_o) &= \frac{p(x_o \mid y)P(y)}{\sum_{y'} p(x_o \mid y')P(y')}\\ &= \text{softmax}_y(\log p(x_o \mid y') + \log P(y')),
\end{aligned}
\end{equation}
and the uncertainty is expressed as the prediction probability. The information gain in \eqref{eq:info_gain} can be estimated analytically, since the entropy for a categorical distribution is analytically available. To estimate the utility, we further simplify \eqref{eq:cmi_continuous} to
\begin{equation}\label{eq:cmi_discrete}
\begin{aligned}
    \mathcal{U}_i &= \mathbb{E}_{p(x_i \mid x_o)P(y \mid x_i, x_o)} \log \frac{P(y \mid x_i,x_o)}{P(y \mid x_o)}\\
    &= \mathbb{E}_{p(x_i \mid x_o)} \KL[P(y \mid x_i, x_o) \| P(y \mid x_o)],
\end{aligned}
\end{equation}
where the KL divergence between two discrete distributions can be analytically computed. $x_i$ is sampled from $p(x_i \mid x_o)$ as before. We again use Monte Carlo estimation for the expectation.

Although the utility can be estimated accurately by \eqref{eq:cmi_continuous} and \eqref{eq:cmi_discrete}, it involves some overhead especially for long episodes, since we need to calculate them for each candidate feature at each acquisition step. Moreover, each Monte Carlo estimation may require multiple samples. To reduce the computation overhead, we utilize \eqref{eq:utility} and estimate the entropy terms with Gaussian approximations. That is, we approximate $p(x_i \mid x_o)$ and $p(x_i \mid y,x_o)$ as Gaussian distributions and entropies reduce to a function of the variance. We use sample variance as an approximation. We found that this Gaussian entropy approximation performs comparably while being much faster.

\subsubsection{Surrogate Model for AIR}
For unsupervised tasks, our goal is to represent the full set of features with an actively selected subset. Since the target is also $x$, we modify our surrogate model to capture arbitrary conditional distributions $p(x_u \mid x_o)$.%, which again can be learned using an ACFlow model. Note that by plugging in $y=x$ to \eqref{eq:utility}, the utility simplifies to the entropy of unobserved features, which is essentially their uncertainties.
\begin{equation}\label{eq:cmi_air}
\resizebox{0.9\linewidth}{!}{$
    \mathcal{U}_i = H(x_i \mid x_o) - \mathbb{E}_{p(x \mid x_o)}H(x_i \mid x, x_o) = H(x_i \mid x_o).$}
\end{equation}
The last equality is due to the fact that $H(x_i \mid x) = 0$. We again use a Gaussian approximation to estimate the entropy. Therefore, the side information for AIR only contains imputed values and their variances of the unobserved features. Similar to the supervised case, we leverage the surrogate model to provide the intermediate rewards. Instead of using the information gain in \eqref{eq:info_gain}, we use the reduction of negative log likelihood per dimension, i.e.,
\begin{equation}\label{eq:bpd_gain}
\resizebox{0.85\linewidth}{!}{$
    r_m(s,i) = \frac{-\log p(x_u \mid x_o)}{|u|} - \gamma \frac{-\log p(x_{u \setminus i} \mid x_o, x_i)}{|u|-1},$}
\end{equation}
since \eqref{eq:info_gain} involves estimating the entropy for potentially high dimensional distributions, which itself is an open problem \citep{kybic2007high}. We show in appendix \ref{sec:invariance} that the optimal policy is invariant under this form of intermediate rewards.
The final reward $-\mathcal{L}(\hat{x},x)$ is calculated as the negative MSE of unobserved features
$-\mathcal{L}(\hat{x},x) = -\Vert \hat{x}_u - x_u \Vert_2^2$.

\subsection{AFA for Time Series}
We also apply our \method~framework on time series data. For example, consider a scenario where sensors are deployed in the field with limited resources. We would like the sensors to decide when to put themselves online to collect data. The goal is to make as few acquisitions as possible while still making an accurate prediction. In contrast to ordinary vector data, the acquired features must follow a chronological order, i.e., the newly acquired feature $i$ must occur after all elements of $o$ (since we may not go back in time to turn on sensors). In this case, it is detrimental to acquire a feature that occurs very late in an early acquisition step, since we will lose the opportunity to observe features ahead of it. The chronological constraint in action space removes all the features behind the acquired features from the candidate set. For example, after acquiring feature $t$, features $\{1,\ldots,t\}$ are no longer considered as candidates for the next acquisition. 

\subsection{Implementation}
We implement our \method~framework using the Proximal Policy Optimization (PPO) algorithm \citep{schulman2017proximal}. The policy network takes in a set of observed features and a set of auxiliary information from the surrogate model, extracts a set embedding from them using the set transformer \citep{lee2019set}, and outputs the actions. The critic network that estimates the value function shares the same set embedding as the policy network. To help learn useful representations, we also use the same set embedding as inputs for the prediction model $f_\theta$. Arbitrary conditionals are estimated based on \citep{li2019flow}.

To reflect the fact that acquiring the same feature repeatedly is redundant, we manually remove those acquired features from the candidate set. For time-series data, the acquired features must follow the chronological order since we cannot go back in time to acquire another feature, therefore we need to remove all the features behind the acquired features from the candidate set. Similar spatial constraints can also be applied for spatial data. To satisfy those constraints, we manually set the probabilities of the invalid actions to zeros.

\begin{figure*}
    \centering
    \begin{minipage}{0.49\linewidth}
    \includegraphics[width=\textwidth]{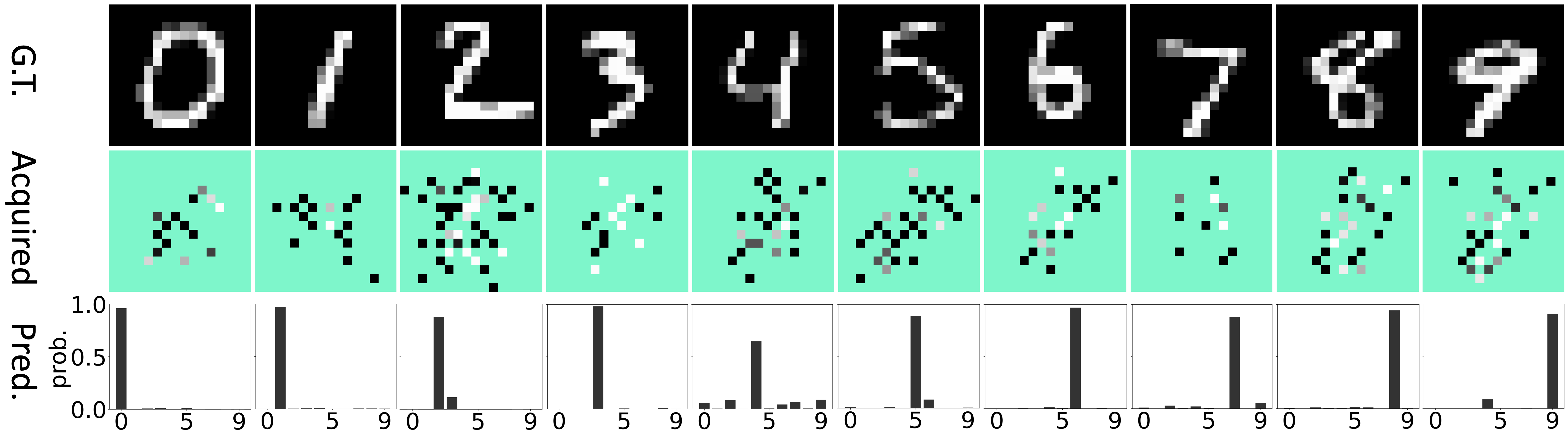}
    \vspace{-18pt}
    \caption{Example of acquired features and prediction. The green masks indicate the unobserved features.}
    \label{fig:mnist_cls}
    \end{minipage}
    \hspace{10pt}
    \begin{minipage}{0.46\linewidth}
    \includegraphics[width=\textwidth]{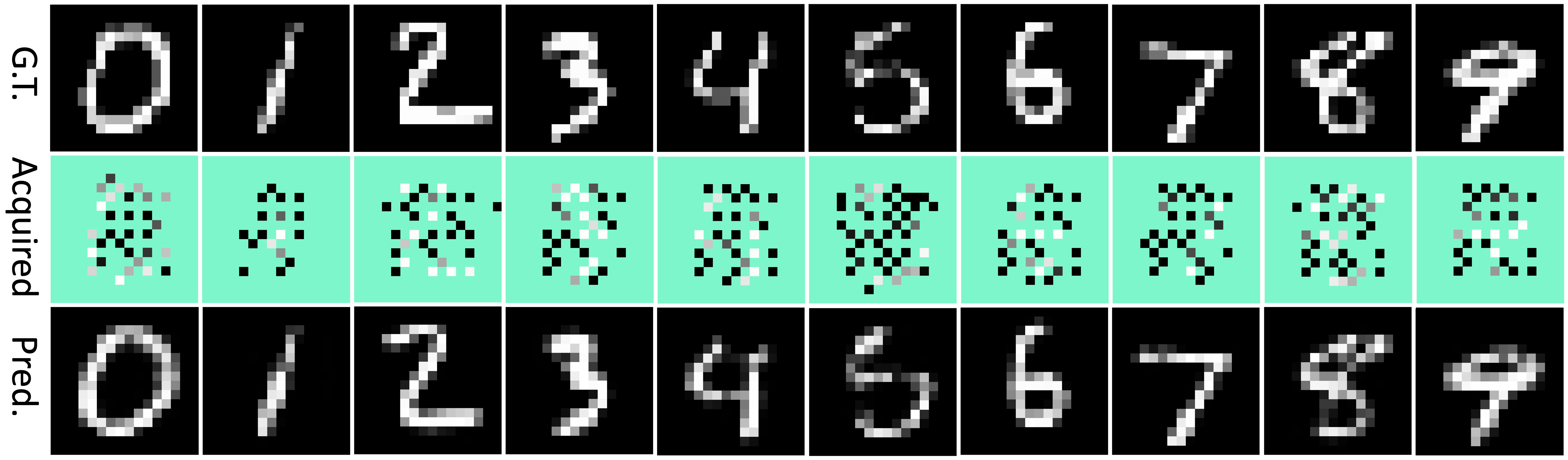}
    \vspace{-18pt}
    \caption{Example of acquired features and inpaintings. The green masks indicate the unobserved features.}
    \label{fig:mnist_air}
    \end{minipage}
    \vspace{-5pt}
\end{figure*}

\begin{figure*}
    \centering
    \begin{minipage}{0.49\textwidth}
    \subfigure{\includegraphics[width=0.5\textwidth]{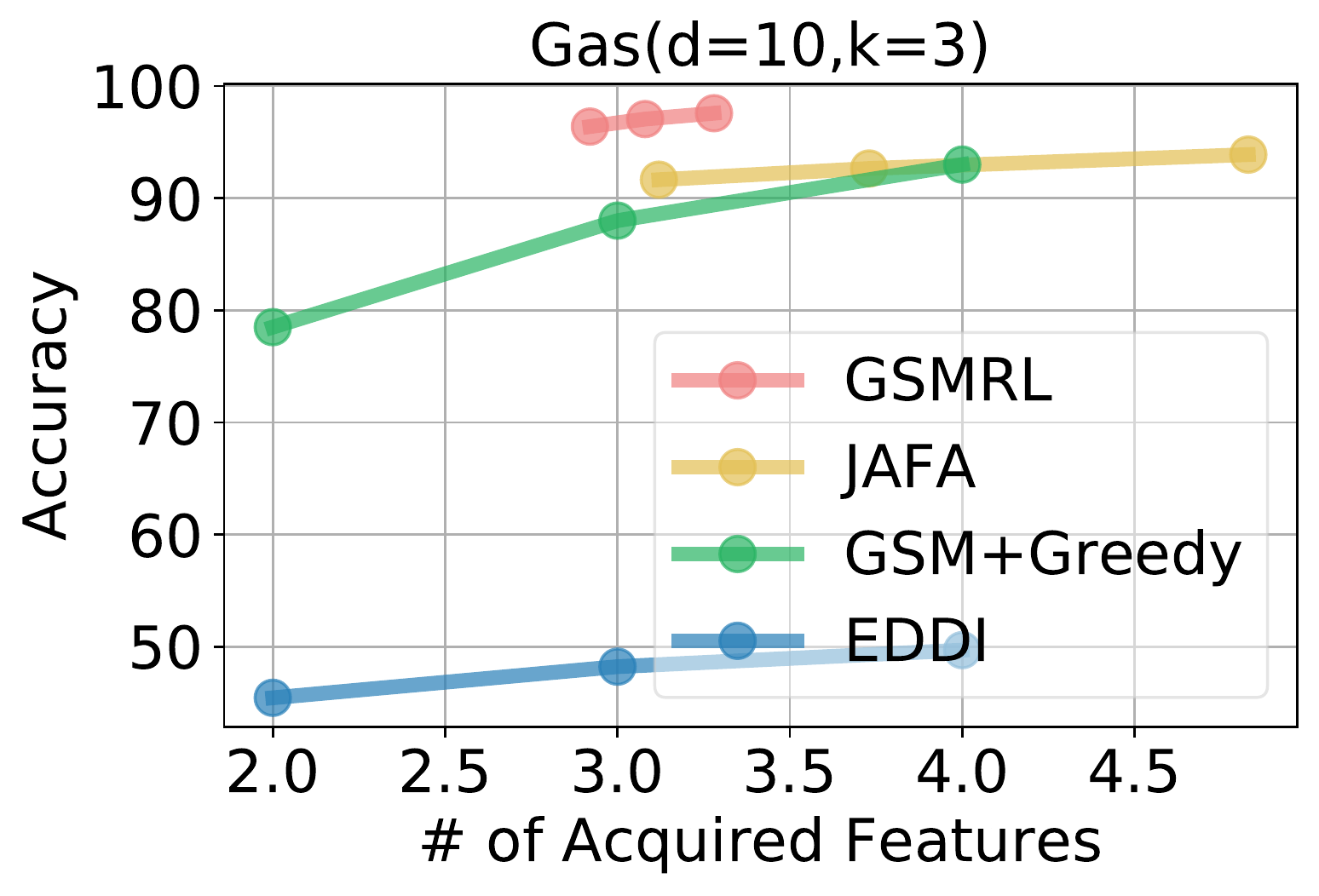}}
    \subfigure{\includegraphics[width=0.48\textwidth]{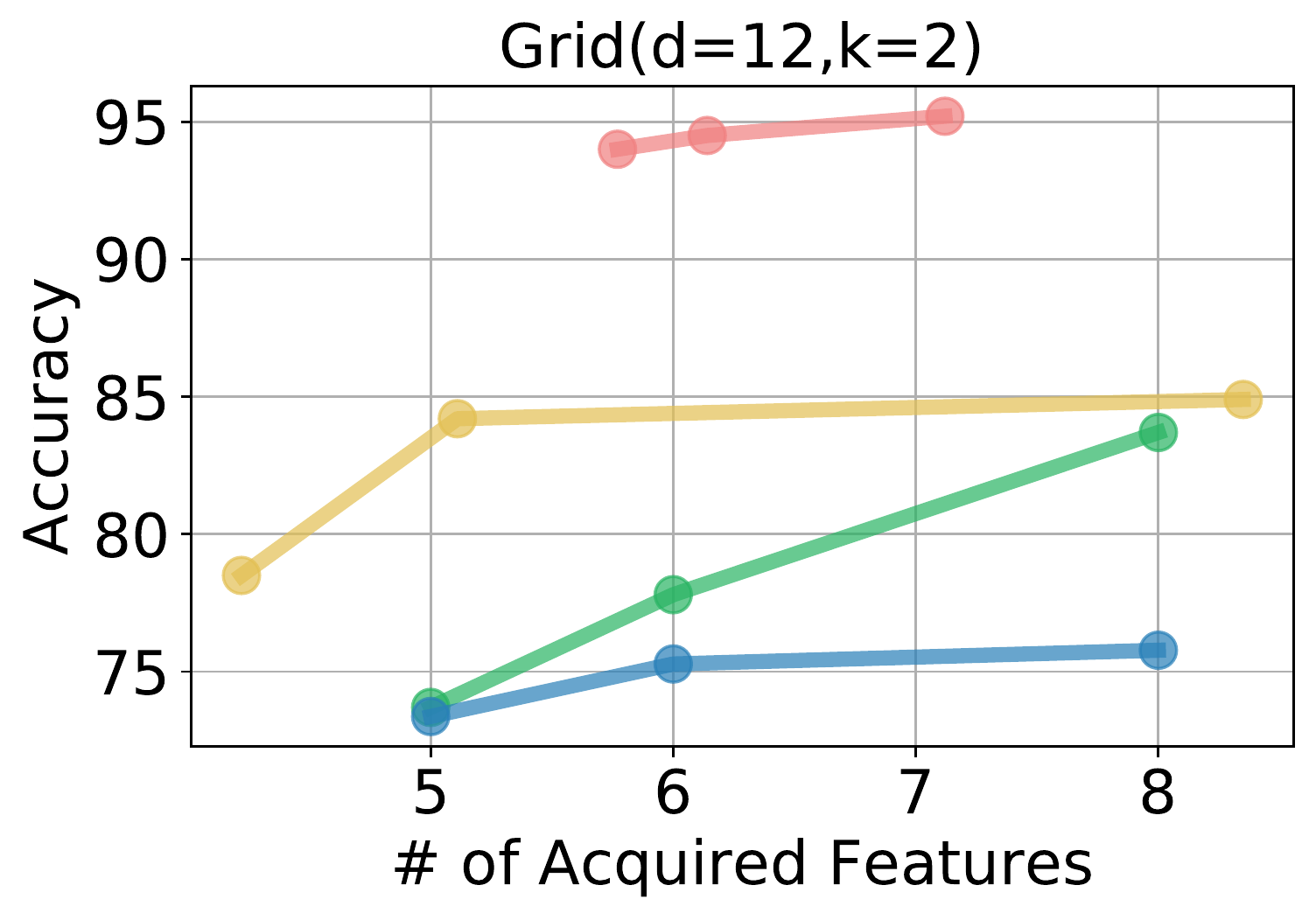}}
    \vspace{-10pt}
    \caption{Test accuracy on UCI datasets.}
    \label{fig:uci_cls_acc}
    \end{minipage}
    \begin{minipage}{0.49\textwidth}
    \subfigure{\includegraphics[width=0.49\textwidth]{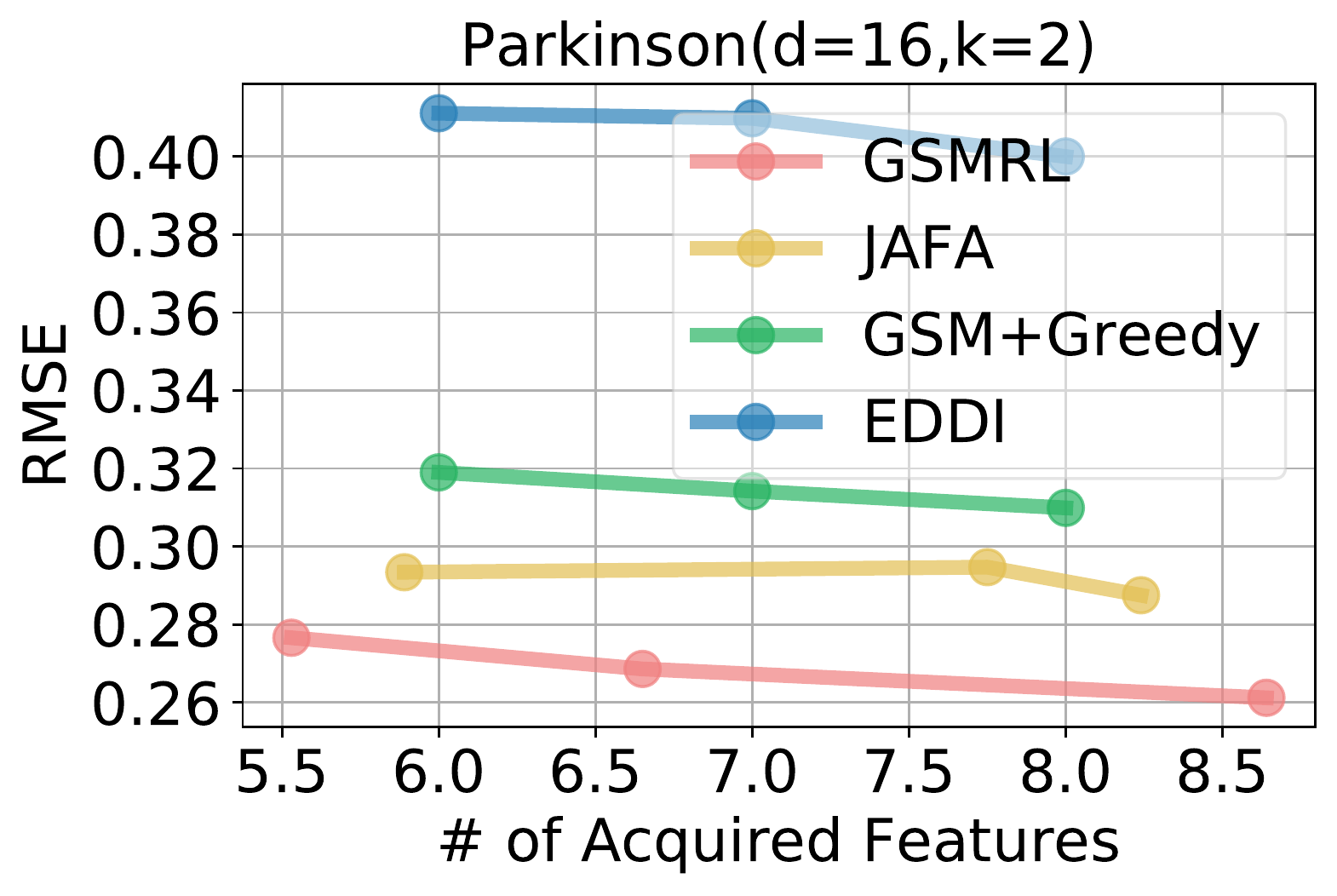}}
    \subfigure{\includegraphics[width=0.49\textwidth]{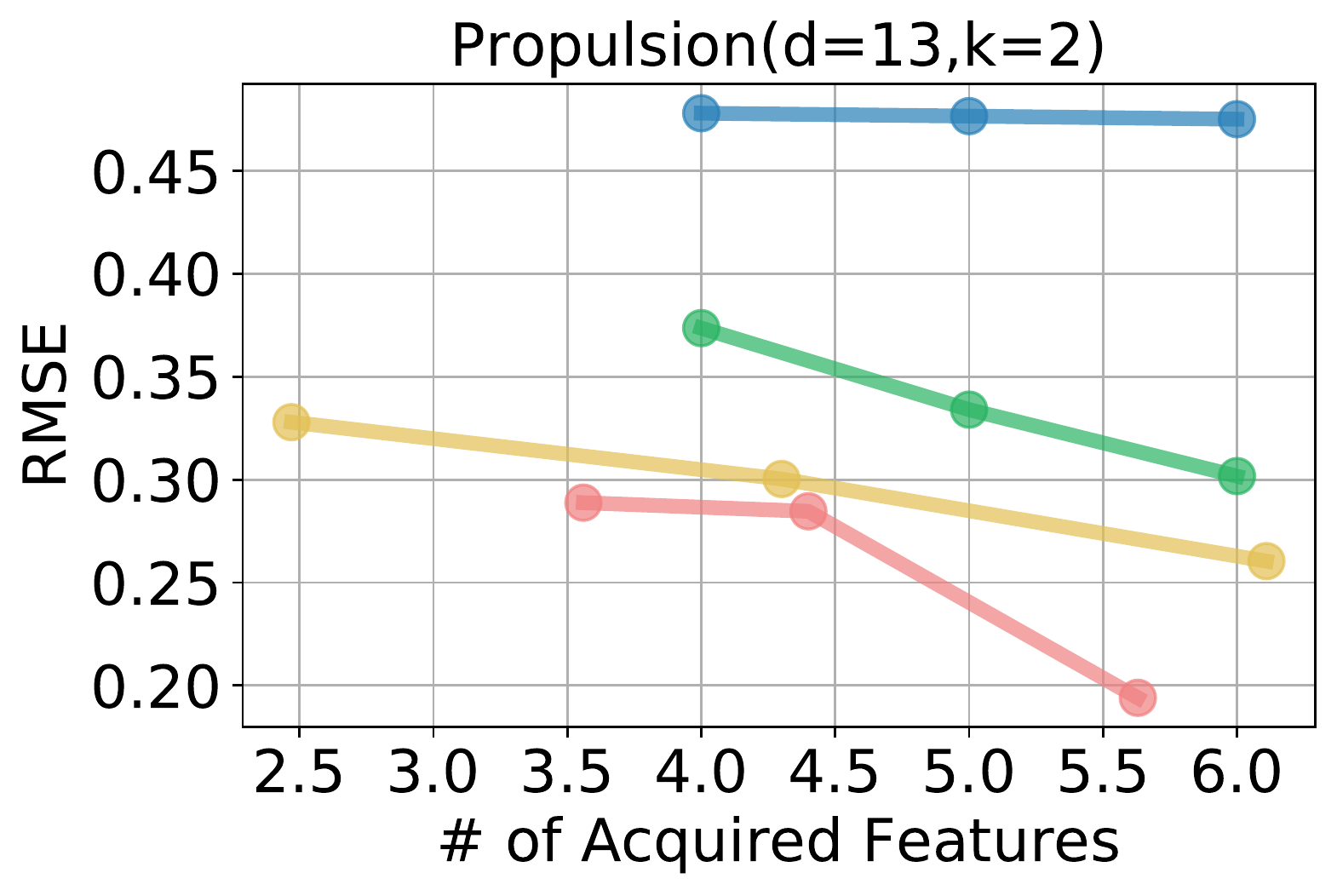}}
    \vspace{-10pt}
    \caption{Test RMSE on UCI datasets.}
    \label{fig:uci_reg_rmse}
    \end{minipage}
    \vspace{-8pt}
\end{figure*}

\section{Related Works}

\textbf{Active Learning}
Active learning \citep{fu2013survey,konyushkova2017learning,yoo2019learning} is a related approach in ML to gather more information when a learner can query an oracle for the true label, $y$, of a complete feature vector $x \in \mathbb{R}^d$ to build a better estimator. However, our methods consider queries to the environment for the feature value corresponding to an unobserved feature dimension, $i$, in order to provide a better prediction on the current instance. Thus, while the active learning paradigm queries an oracle \emph{during training} to build a classifier with complete features, our paradigm queries the environment \emph{at evaluation} to obtain missing features of a current instance to help its current assessment.

\textbf{Feature Selection}
Feature selection \citep{miao2016survey,li2017feature,cai2018feature}, ascertains a static subset of important features to eliminate redundancies, which can help reduce computation and improve generalization. Feature selection methods choose a \emph{fixed} subset of features $s \subseteq \{1,\dots, d\}$, and always predict $y$ using this same subset of feature values, $x_s$. In contrast, our model considers a \emph{dynamic} subset of features that is sequentially chosen and personalized on an instance-by-instance basis to increase useful information. 
% It is worth noting that our method may be applied after an initial feature selection preprocessing step to reduce the search space.

\textbf{Active Feature Acquisition}
Instead of predicting the target passively using collected features, previous works have explored actively acquiring features in the cost-sensitive setting. Active perception is a relevant sub-field where a robot with a mounted camera is planning by selecting the best next view \citep{bajcsy1988active,aloimonos1988active, cheng2018reinforcement,jayaraman2018learning}. In this work we consider general features, and take images as one of many data sources. For general data, \citet{ling2004decision}, \citet{chai2004test} and \citet{nan2014fast} propose decision tree, naive Bayes and maximum margin based classifiers, respectively, to jointly minimize the misclassification cost and feature acquisition cost. \citet{ma2018eddi} and \citet{gong2019icebreaker} acquire features greedily using mutual information as the estimated utility. \citet{zubek2004pruning} formulate the AFA problem as a MDP and fit a transition model using complete data, then they use the AO* heuristic search algorithm to find an optimal policy. \citet{ruckstiess2011sequential} formulate the problem as a partially observable MDP and solve it using Fitted Q-Iteration. \citet{he2012imitation} and \citet{he2016active} instead employ a imitation learning approach guided by a greedy reference policy. \citet{shim2018joint} utilize Deep Q-Learning and jointly learn a policy and a classifier. The classifier is treated as an environment that calculates the classification loss as the reward. ODIN \citep{zannone2019odin} presents an approach to learn a policy and a prediction model using augmented data with a Partial VAE \citep{ma2018eddi}. In contrast, GSMRL uses a surrogate model, which estimates both the state transitions and the prediction in a unified model, to directly provide intermediate rewards and auxiliary information to an agent.
%ODIN \citep{zannone2019odin} presents an approach to combine the model-based and model-free RL for the active feaure acquisition problem. They train a Partial VAE \citep{ma2018eddi} as the dynamics model to generate synthetic data and learn a policy and a prediction model using those generated data. In our method, however, a surrogate model, which estimates both the state transitions and the prediction in a unified model, is utilized to provide intermediate rewards and auxiliary information.

% \textbf{Active Perception}
% Active perception is a relevant field in robotics where a robot with a mounted camera is planning by selecting the best next view \citep{bajcsy1988active,aloimonos1988active}. Reinforcement learning based active perception models \citep{cheng2018reinforcement,jayaraman2018learning} have recently been proposed to select the next view. In future work, we will explore applying our method for active vision.

\textbf{Model-based and Model-free RL}
Reinforcement learning can be roughly grouped into model-based methods and model-free methods depending on whether they use a transition model \citep{li2017deep}. Model-based methods are more data efficient but could suffer from significant bias if the dynamics are misspecified. On the contrary, model-free methods can handle arbitrary dynamic system but typically requires substantially more data samples. There have been works that combine model-free and model-based methods to compensate with each other. The usage of the model includes generating synthetic samples to learn a policy \citep{gu2016continuous}, back-propagating the reward to the policy along a trajectory \citep{heess2015learning}, and planning \citep{chebotar2017combining,pong2018temporal}. In this work, we rely on the model to provide intermediate rewards and side information. We compare this strategy to other model-based approaches in section \ref{sec:ablations}.

\section{Experiments}
In this section, we evaluate our method on several benchmark environments built upon the UCI repository \citep{Dua:2019} and MNIST dataset \citep{lecun1998mnist}. We compare our method to another RL based approach, JAFA \citep{shim2018joint}, which jointly trains an agent and a classifier. We also compare to a greedy policy EDDI \citep{ma2018eddi} that estimates the utility for each candidate feature using a VAE based model and selects one feature with the highest utility at each acquisition step. As a baseline, we also acquire features greedily using our surrogate model that estimates the utility following \eqref{eq:cmi_continuous}, \eqref{eq:cmi_discrete} and \eqref{eq:cmi_air}. We use a fixed cost for each feature and report multiple results with different $\alpha$ in \eqref{eq:goal} to control the trade-off between task performance and acquisition cost. We cross validate the best architecture and hyperparameters for baselines. Architectural details, hyperparameters and sensitivity analysis are provided in the Appendix. In this work, we conduct the largest scale AFA study to date.  
Previous works have typically considered smaller datasets (both in terms of the number of features and the number of instances).
%Previous works only consider small datasets (both in terms of the number of features and the number of instances).
We instead consider a broad range of datasets with more instances and higher dimensionality. In terms of comparisons, previous works often compare to naively simple baselines, such as a random acquisition order. In this work, we compare our GSMRL to the state-of-the-art models with both greedy policy and non-greedy RL policy.

\begin{figure*}
    \centering
    \begin{minipage}{0.49\textwidth}
    \subfigure{\includegraphics[width=0.49\textwidth]{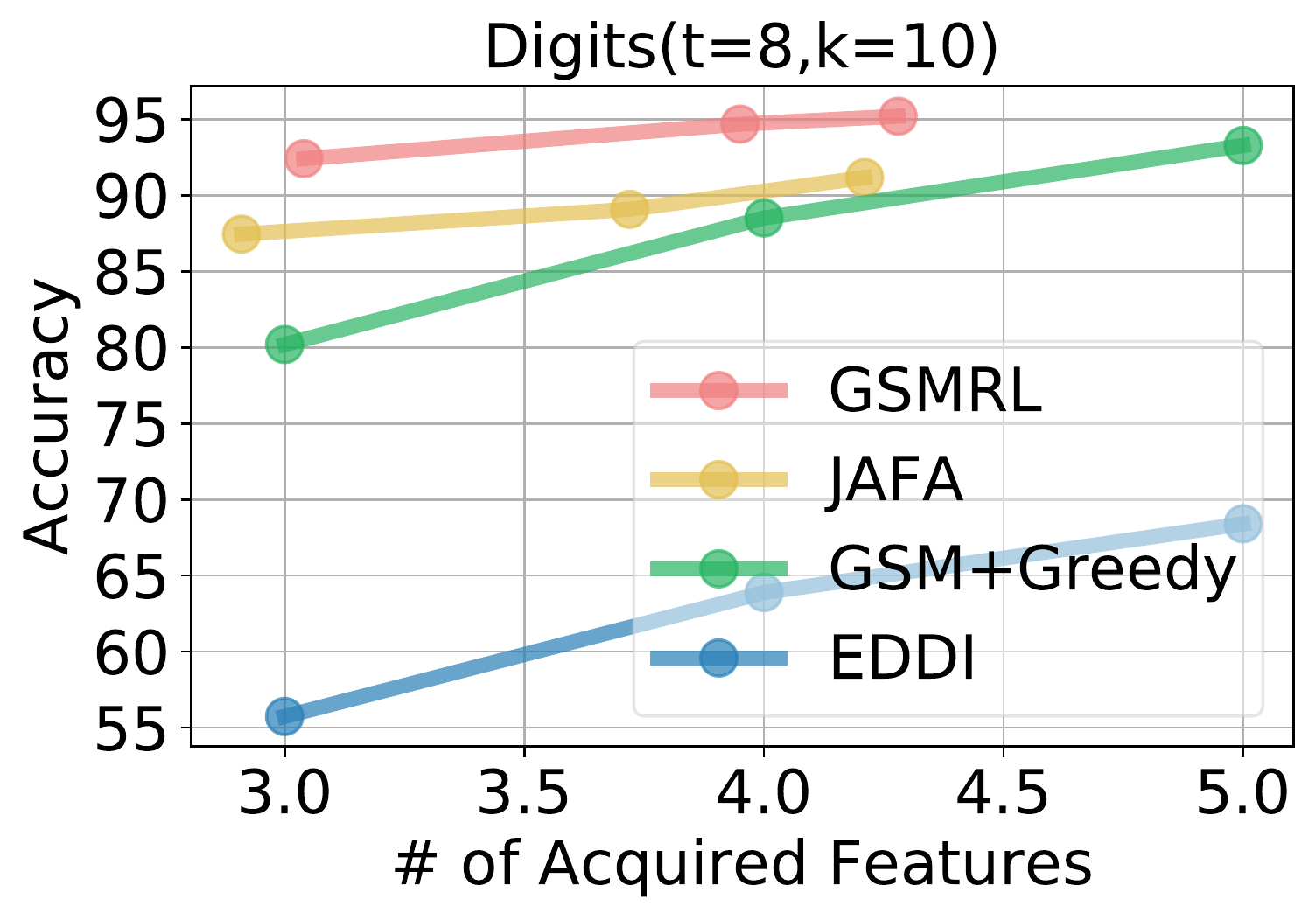}}
    \subfigure{\includegraphics[width=0.49\textwidth]{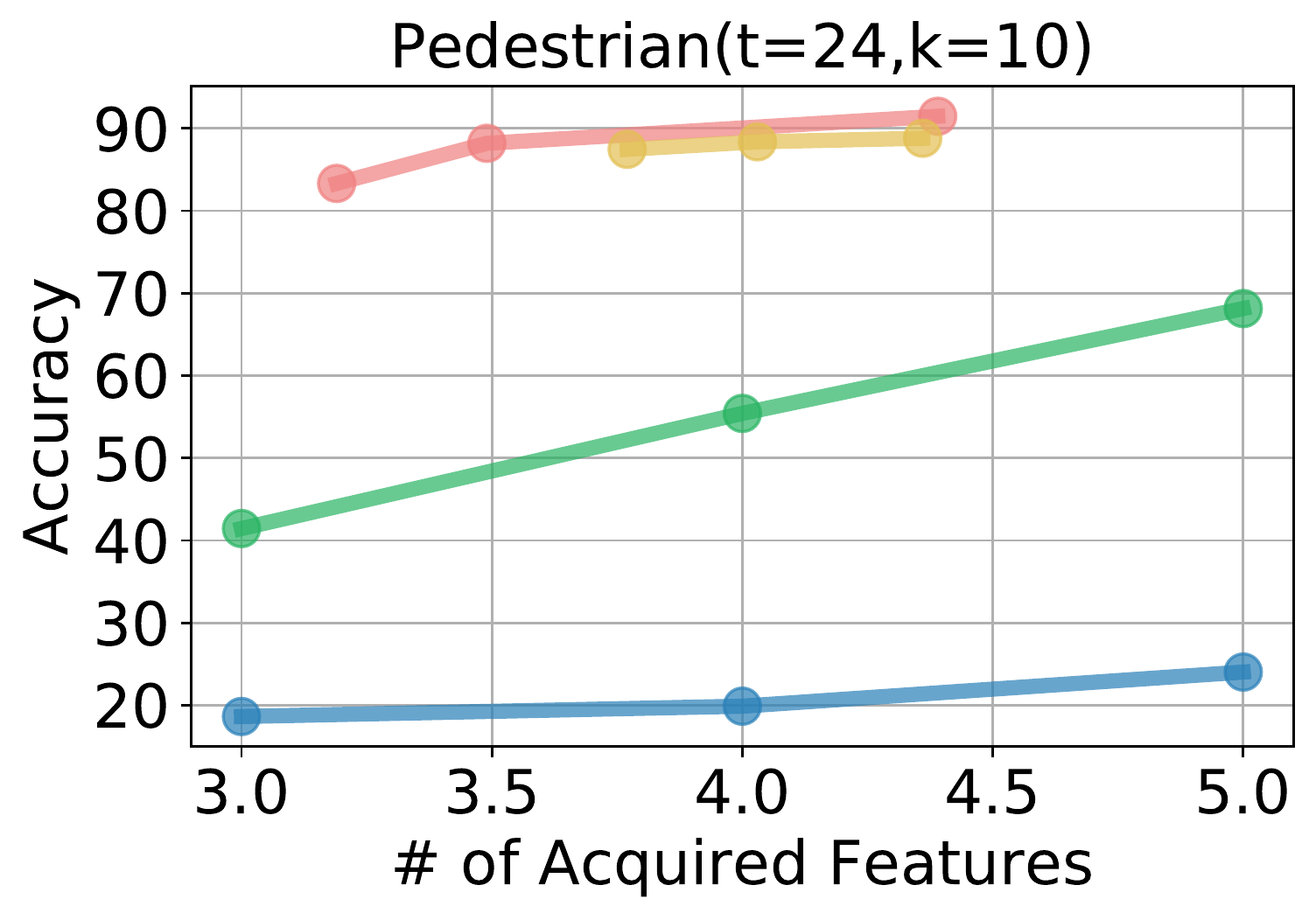}}
    \vspace{-10pt}
    \caption{Classification on time series.}
    \label{fig:uci_ts_acc}
    \end{minipage}
    \begin{minipage}{0.49\textwidth}
    \subfigure{\includegraphics[width=0.49\textwidth]{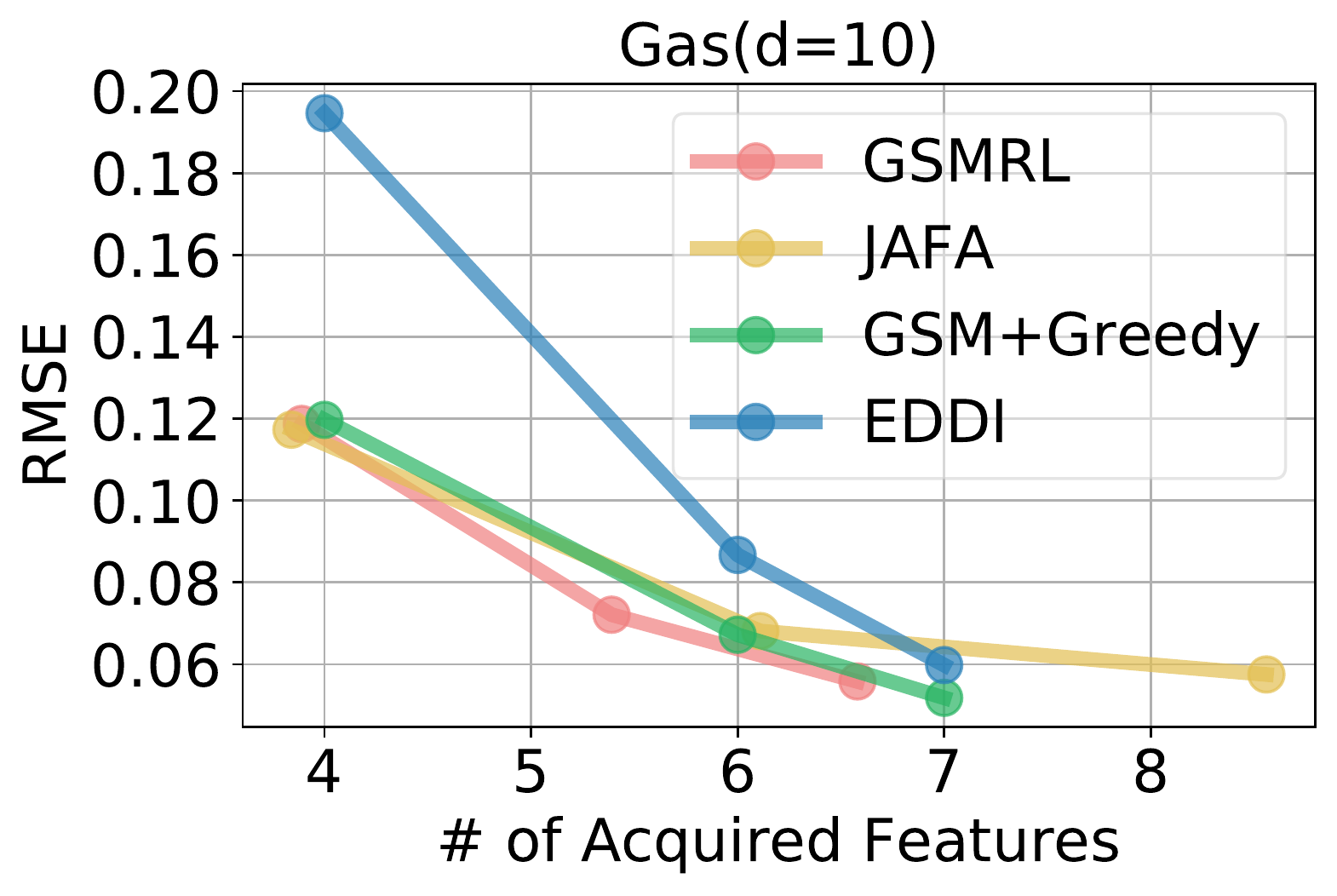}}
    \subfigure{\includegraphics[width=0.49\textwidth]{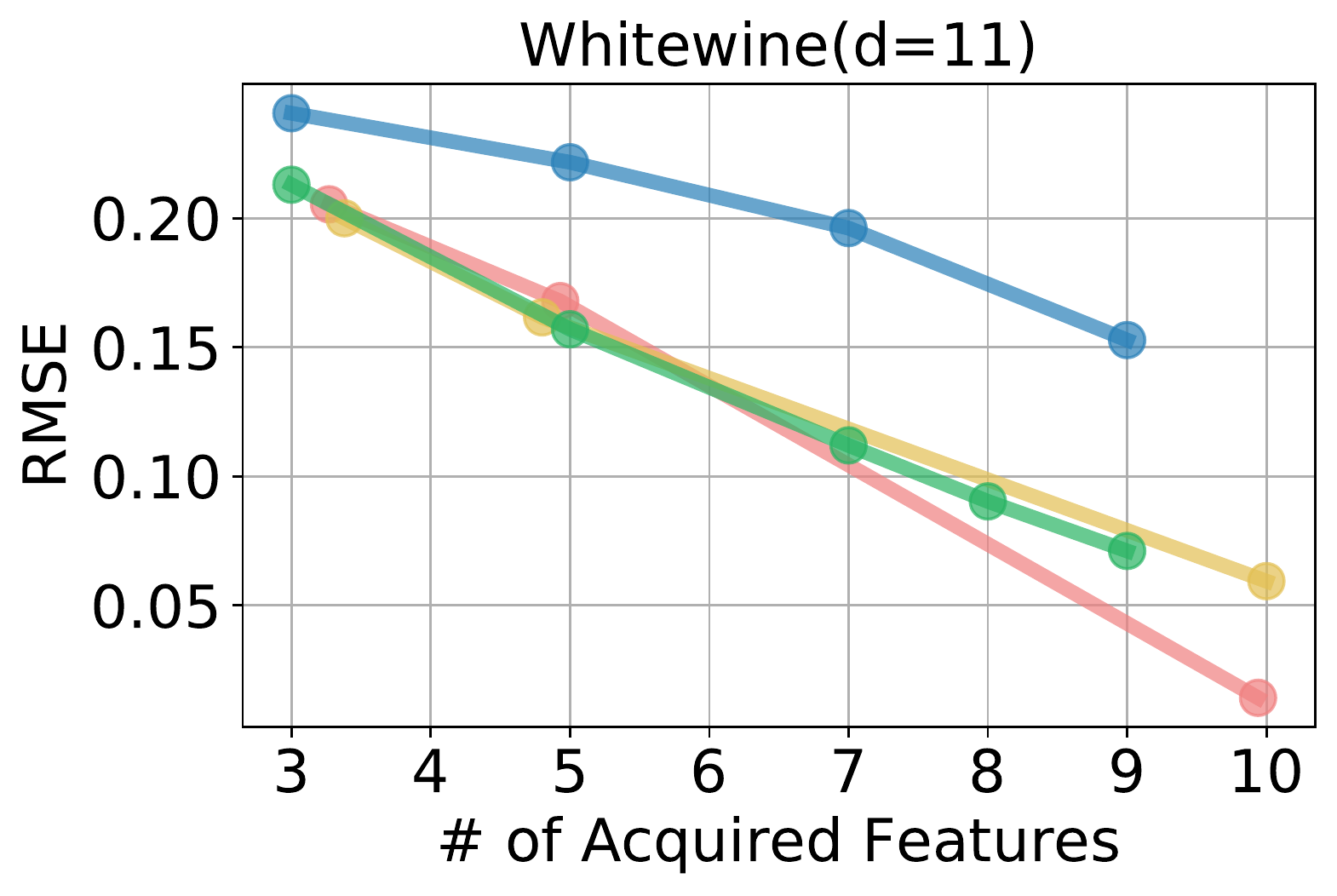}}
    \vspace{-10pt}
    \caption{RMSE for unsupervised tasks.}
    \label{fig:uci_air_rmse}
    \end{minipage}
    \vspace{-8pt}
\end{figure*}

\textbf{Classification}
We first perform classification on the MNIST dataset. We downsample the original images to $16\times16$ to reduce the action space to accommodate baselines such as EDDI that have trouble scaling (see Sec.~\ref{sec:results} in the appendix for details on full MNIST).
\begin{wrapfigure}{r}{0.45\linewidth}
    \centering
    \vspace{-5pt}
    \includegraphics[width=\linewidth]{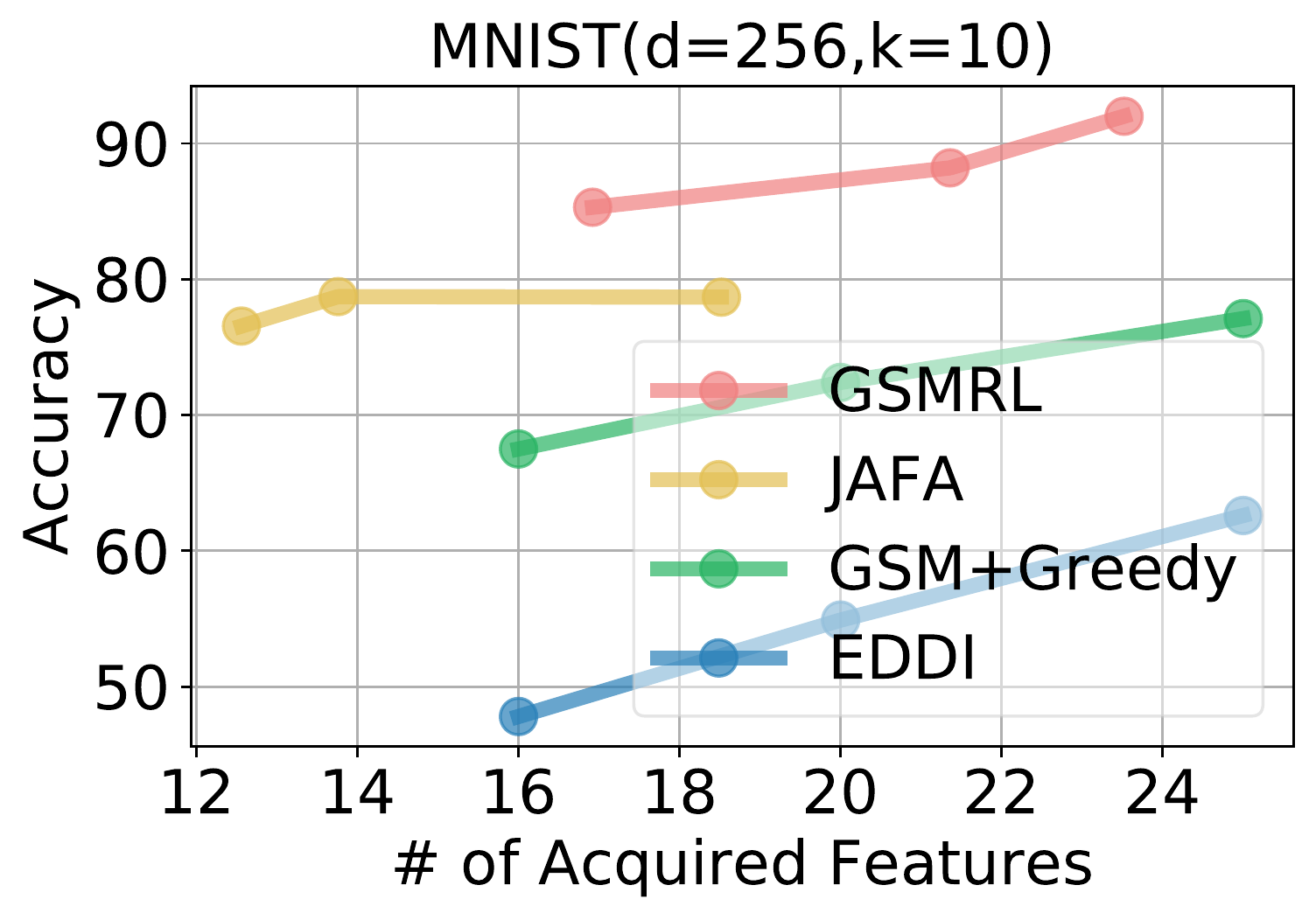}
    \vspace{-20pt}
    \caption{Test accuracy for AFA on MNIST.}
    \label{fig:mnist_cls_acc}
    \vspace{-5pt}
\end{wrapfigure}
Fig.~\ref{fig:mnist_cls} illustrates several examples of the acquired features and their prediction probability for different images. We can see that our model acquires a different subset of features for different images. Notice the checkerboard patterns of the acquired features, which indicates our model is able to exploit the spatial correlation of the data. Fig.~\ref{fig:mnist_cls_acq} shows the acquisition process and the prediction probability along the acquisition. We can see the prediction become certain after acquiring only a small subset of features. The test accuracy in Fig.~\ref{fig:mnist_cls_acc} demonstrates the superiority of our method over other baselines. It typically achieves higher accuracy with a lower acquisition cost. It is worth noting that our surrogate model with a greedy acquisition policy outperforms EDDI. We believe the improvement is due to the better distribution modeling ability of our  surrogate model so that the utility and the prediction are more accurately estimated. We also perform classification using several UCI datasets. The test accuracy is presented in Fig.~\ref{fig:uci_cls_acc}. Again, our method outperforms baselines under the same acquisition budget.

\textbf{Regression}
We also conduct experiments for regression tasks using several UCI datasets. We report the root mean squared error (RMSE) of the target variable in Fig.~\ref{fig:uci_reg_rmse}. Similar to the classification task, our model outperforms baselines with a lower acquisition cost.

\textbf{Time Series}
To evaluate the performance with constraints in action space, we classify over time series data where the acquired features must follow chronological ordering. The datasets are from the UEA \& UCR time series classification repository \citep{bagnall2017great}. For \method~and JAFA, we clip the probability of invalid actions to zero; for the greedy method, we use a prior to bias the selection towards earlier time points. Please refer to Appendix \ref{sec:time_series} for details. Fig.~\ref{fig:uci_ts_acc} shows the accuracy with different numbers of acquired features. Our method achieves high accuracy by collecting a small subset of the features. 

\textbf{Medical Diagnosis}
We evaluate the AFA performance for medical diagnosis. We use the Physionet challenge
\begin{wrapfigure}{r}{0.45\linewidth}
    \centering
    \vspace{-5pt}
    \includegraphics[width=\linewidth]{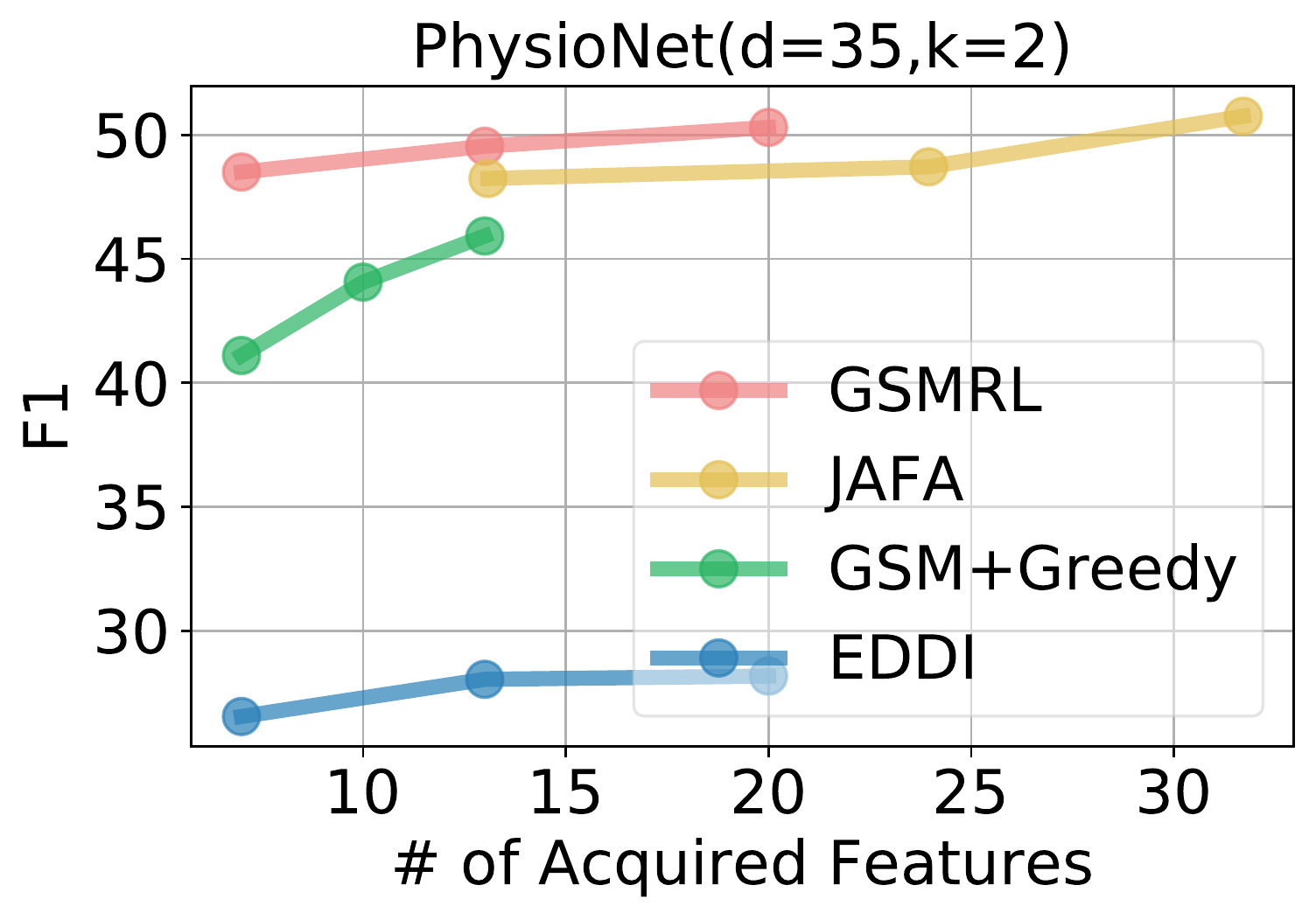}
    \vspace{-20pt}
    \caption{F1 for in-hospital mortality on Physionet.}
    \label{fig:physionet12}
    \vspace{-5pt}
\end{wrapfigure}
2012 dataset \cite{goldberger2000physiobank} to predict the in-hospital mortality. Since the classes are heavily imbalanced, we use weighted cross entropy as training loss and the final rewards. For evaluation, we report the F1 scores in Fig.~\ref{fig:physionet12}. Compared to baselines, our model achieves higher F1 with lower acquisition cost.

\textbf{Unsupervised}
Next, we evaluate our method on unsupervised tasks where features are actively acquired to impute the unobserved features. 
\begin{wrapfigure}{r}{0.45\linewidth}
    \centering
    \vspace{-5pt}
    \includegraphics[width=\linewidth]{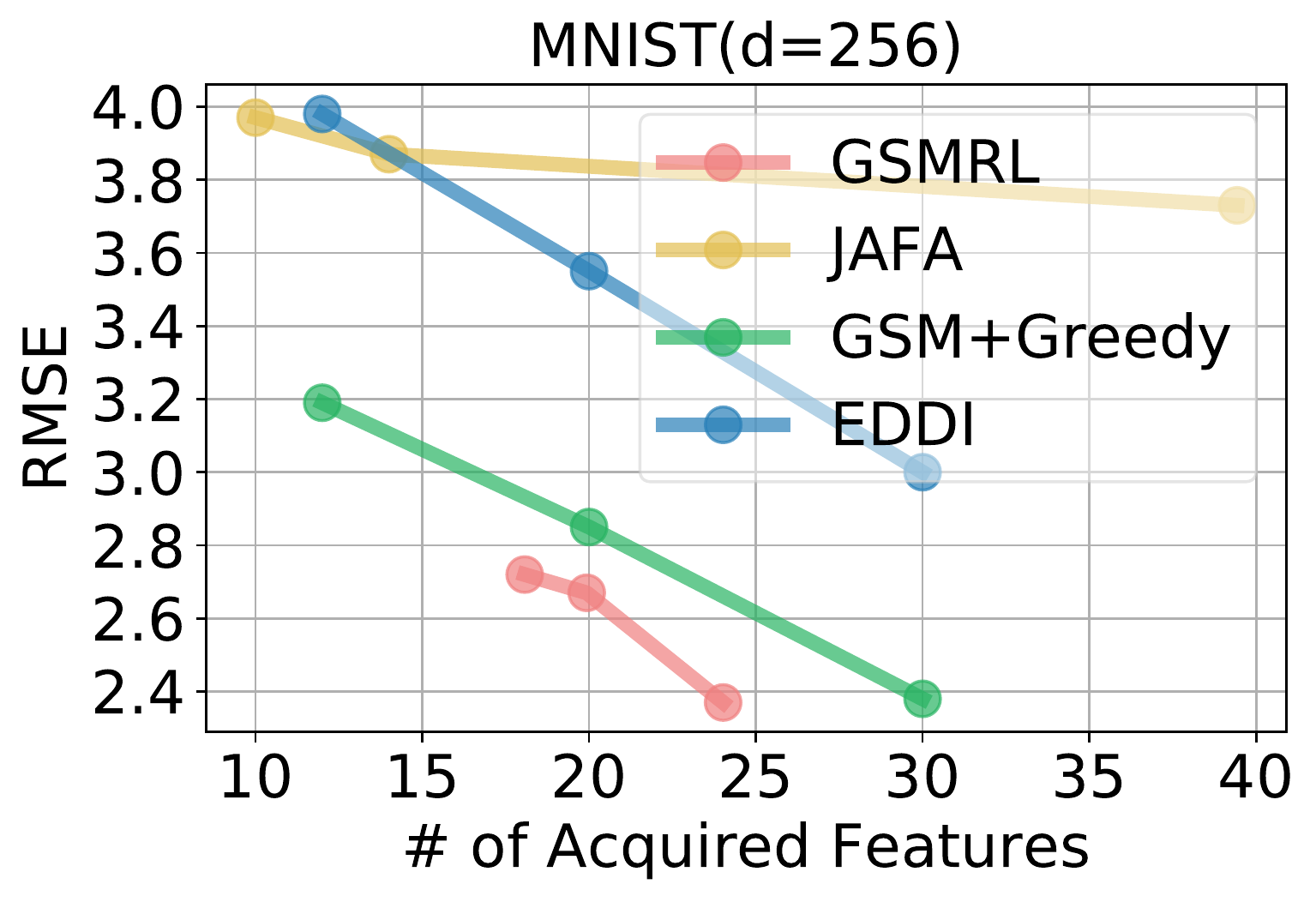}
    \vspace{-20pt}
    \caption{RMSE of $x_u$ for AIR on MNIST.}
    \label{fig:mnist_air_rmse}
    \vspace{-5pt}
\end{wrapfigure}
We use negative MSE as the reward for \method~and JAFA. The greedy policy calculates the utility following \eqref{eq:cmi_air}. For low dimensional UCI datasets, our method is comparable to baselines as shown in Fig.~\ref{fig:uci_air_rmse}; but for the high dimensional case, as shown in Fig.~\ref{fig:mnist_air_rmse}, our method is doing better. Note JAFA is worse than the greedy policy for MNIST. We found it hard to train the policy and the reconstruction model jointly without the help of the surrogate model in this case. See Fig.~\ref{fig:mnist_air_acq} for an example of the acquisition process.

\section{Ablations}\label{sec:ablations}
We now conduct a series of ablation studies to explore the capabilities of our GSMRL model.

\textbf{Model-based Alternatives}
Our GSMRL model combines model-based and model-free approach into a holistic framework by providing the agent with auxiliary information and intermediate rewards. Here, we study different ways of utilizing the dynamics model. As in ODIN \cite{zannone2019odin}, we utilize class conditioned generative models to generate synthetic trajectories. 
\begin{wrapfigure}{r}{0.45\linewidth}
    \centering
    \vspace{-5pt}
    \includegraphics[width=\linewidth]{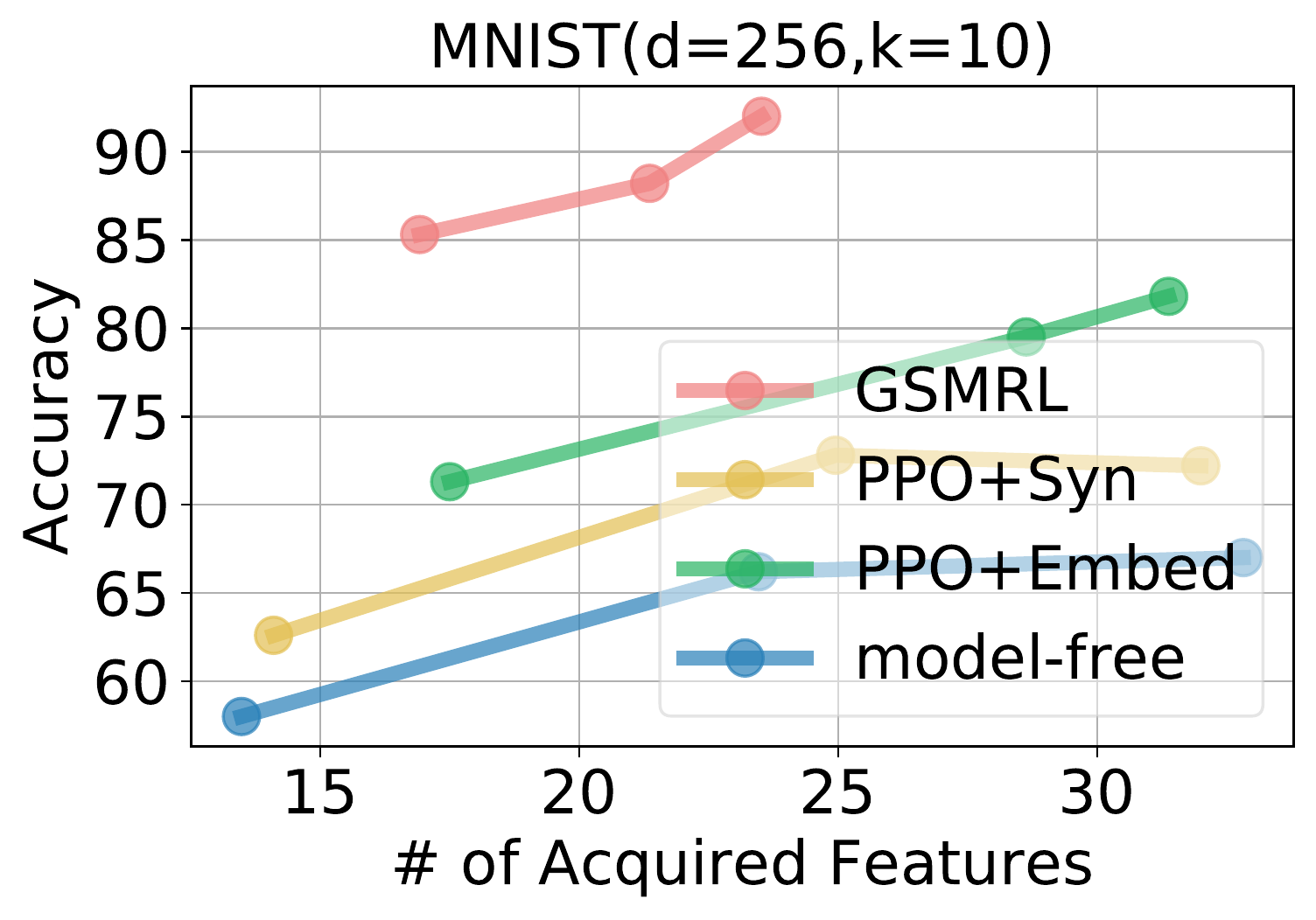}
    \vspace{-20pt}
    \caption{Other model-based approaches.}
    \label{fig:model-based}
    \vspace{-5pt}
\end{wrapfigure}
The agent is then trained with both real and synthetic data (PPO+Syn). Another way of using the model is to extract a semantic embedding from the observations \cite{kumar2018consistent}. We use a pretrained EDDI to embed the current observed features into a 100-dimensional feature vector. An agent then takes the embedding as input and predicts the next acquisition (PPO+Embed). Figure \ref{fig:model-based} compares our method with these alternatives. We also present the results from a model-free approach as a baseline. We see our GSMRL outperforms other model-based approaches by a large margin.

\textbf{Surrogate Models}
Our method relies on the surrogate model to provide intermediate rewards and auxiliary information. To better understand the contributions each component does to the overall framework, we conduct ablation studies using the MNIST dataset. We gradually drop one component from the full model and report the results in Fig.~\ref{fig:ablation}. The `Full Model' uses both intermediate rewards and auxiliary information. 
\begin{wrapfigure}{r}{0.45\linewidth}
    \vspace{-5pt}
    \begin{minipage}{\linewidth}
    \includegraphics[width=\linewidth]{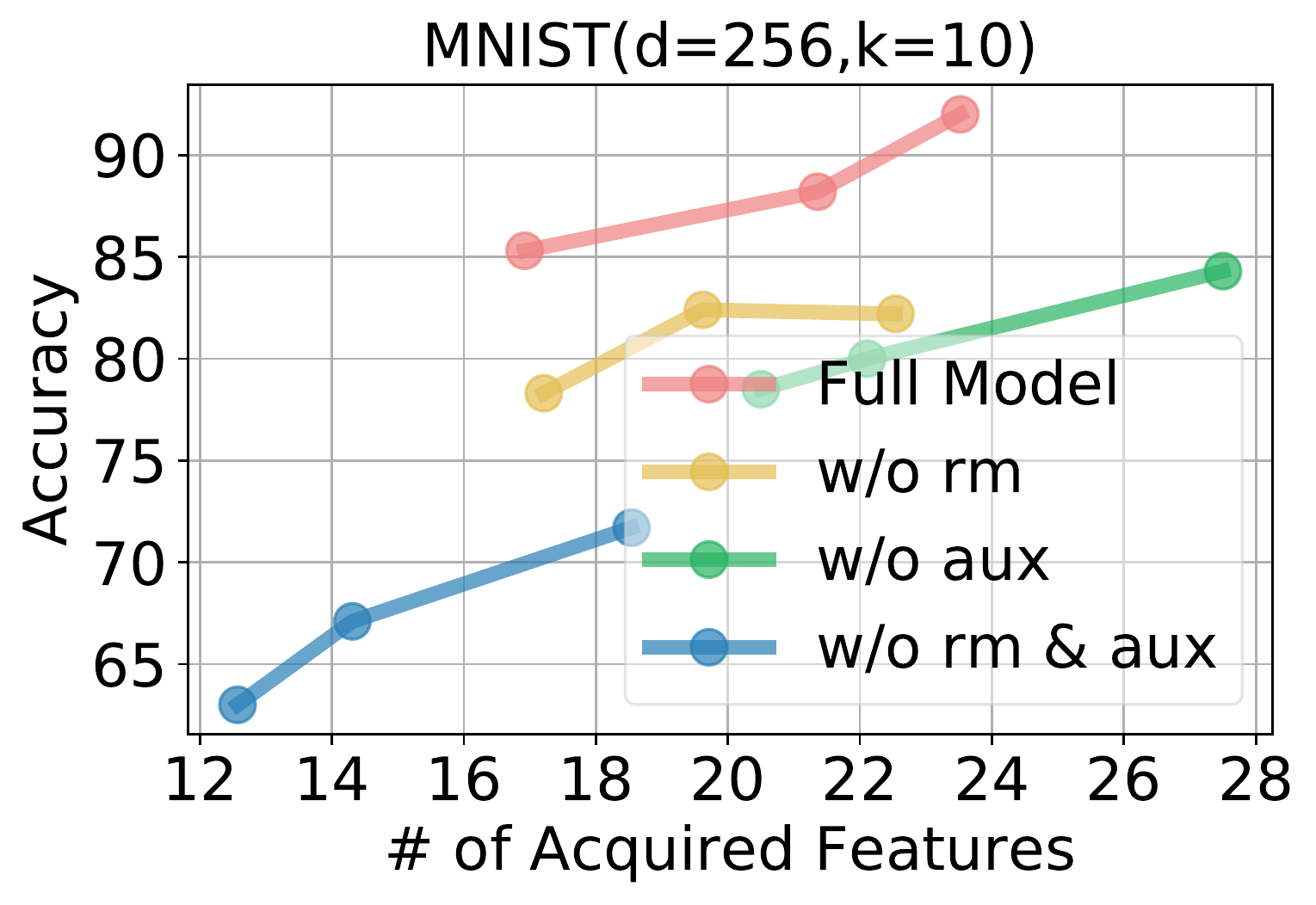}
    \vspace{-20pt}
    \caption{Ablations}
    \label{fig:ablation}
    \vspace{7pt}
    \end{minipage}
    \begin{minipage}{\linewidth}
    \includegraphics[width=\linewidth]{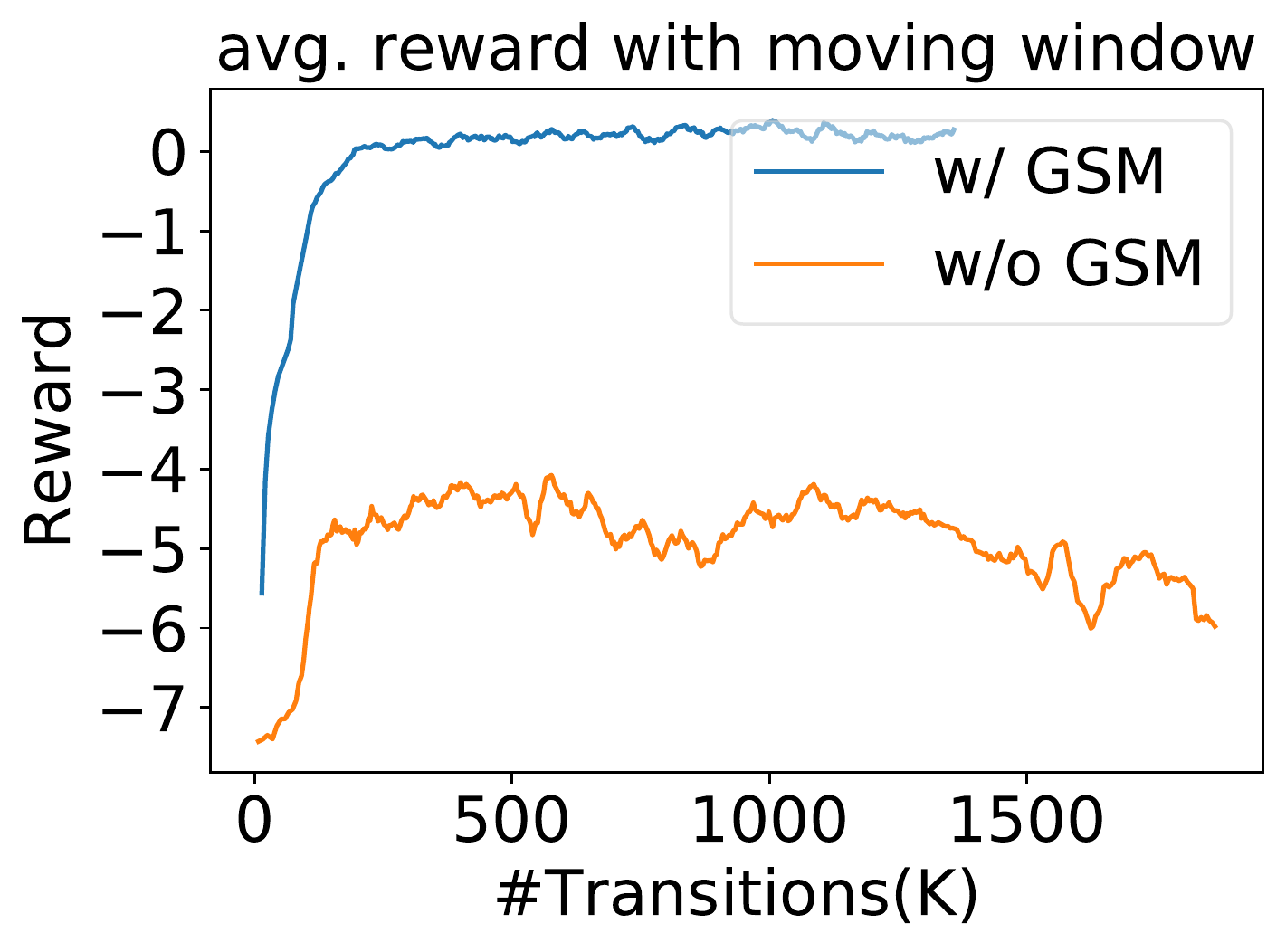}
    \vspace{-20pt}
    \caption{Rewards}
    \label{fig:rewards}
    \end{minipage}
    \vspace{-7pt}
\end{wrapfigure}
We then drop the intermediate rewards and denote it as `w/o rm'. The model without auxiliary information is denoted as `w/o aux'. We further drop both components and denote it as `w/o rm \& aux'. From Fig.~\ref{fig:ablation}, we see these two components contribute significantly to the final results. We also compare models with and without the surrogate model. For models without a surrogate model, we train a classifier jointly with the agent as in JAFA. We plot the smoothed rewards using moving window average during training in Fig.~\ref{fig:rewards}. The agent with a surrogate model not only produces higher and smoother rewards but also converges faster.

\textbf{Dynamic vs. Static Acquisition}
Our GSMRL acquires features following a dynamic order where it eventually acquires different features for different instances.
A dynamic acquisition policy should perform better than a static one (i.e., the same set of features are acquired for each instance), since the dynamic policy allows the acquisition to be specifically adapted to the corresponding instance. To verify this is actually the case, we compare the dynamic and static acquisition under a greedy policy for MNIST classification. 
\begin{wrapfigure}{r}{0.45\linewidth}
\vspace{-5pt}
\includegraphics[width=\linewidth]{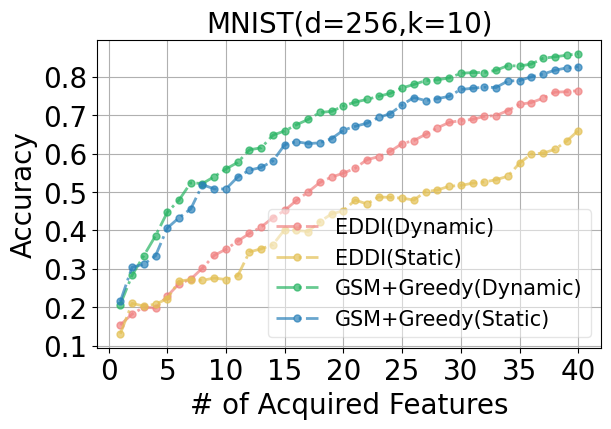}
\vspace{-20pt}
\caption{Compare dynamic and static acquisition strategy using greedy policies.}
\label{fig:mnist_dyn_stc}
\vspace{-5pt}
\end{wrapfigure}
Similar to the dynamic greedy policy, the static acquisition policy acquires the feature with maximum utility at each step, but the utility is averaged over the whole testing set, therefore the same acquisition order is adopted for the whole testing set. Figure \ref{fig:mnist_dyn_stc} shows the classification accuracy for both EDDI and GSM under a greedy acquisition policy. We can see the dynamic policy is always better than the corresponding static one. Furthermore, our GSM with a static acquisition can already outperform dynamic EDDI.

\textbf{Greedy vs. Non-greedy Acquisition}
Our GSMRL will terminate the acquisition process if the agent deems the current acquisition achieves the optimal trade-off between the prediction performance and the acquisition cost. 
\begin{wrapfigure}{r}{0.45\linewidth}
\vspace{-7pt}
\includegraphics[width=\linewidth]{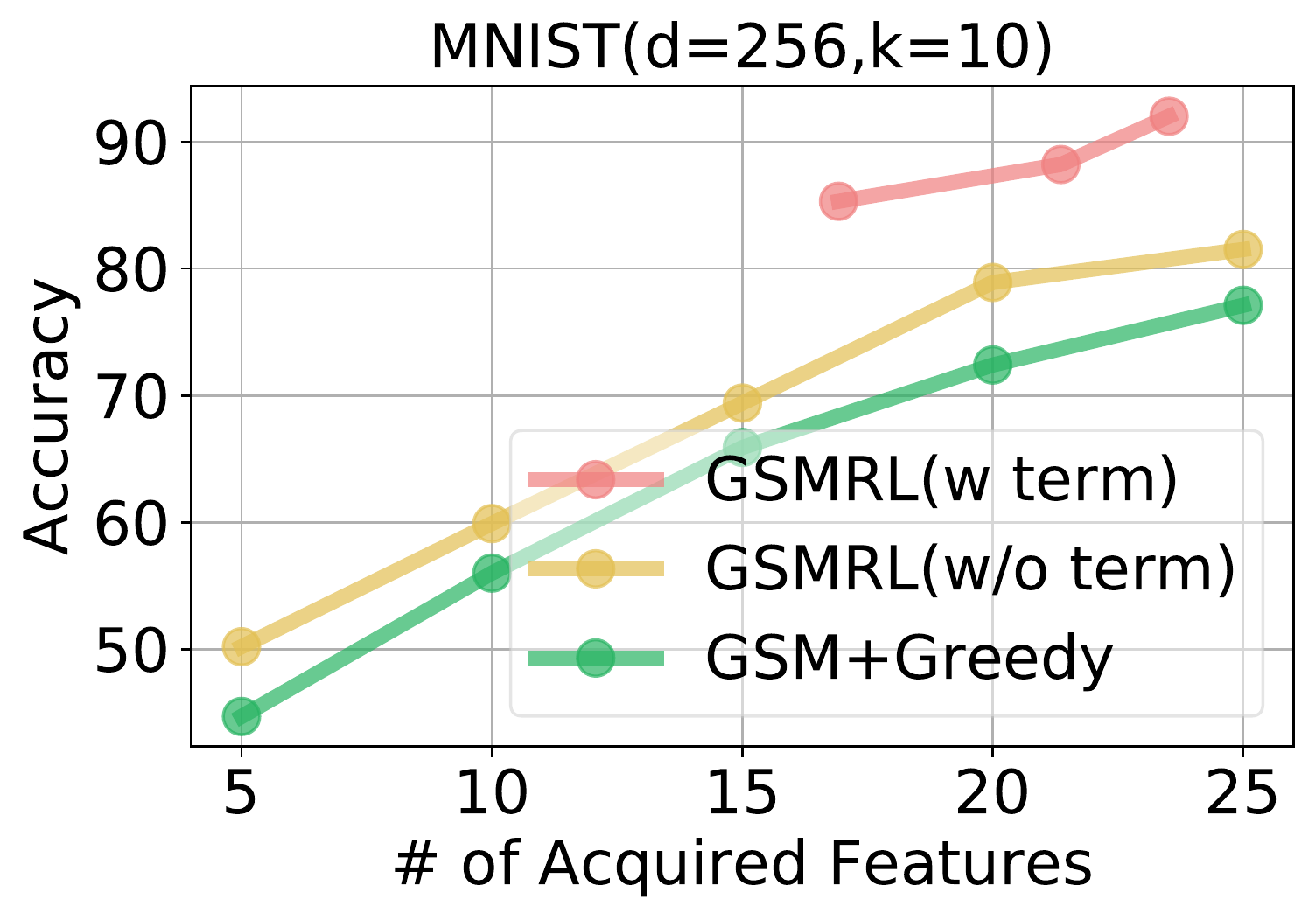}
\vspace{-20pt}
\caption{Acquisition with and without termination.}
\label{fig:mnist_fixed_budget}
\vspace{-5pt}
\end{wrapfigure}
To evaluate how much the termination action affects the performance and to directly compare with the greedy policies under the same acquisition budget, we conduct an ablation study that removes the termination action  
%(`w/o term') 
and gives the agent a hard acquisition budget 
(i.e., forcing the agent to predict after some number of acquisitions).
%(i.e., the number of features it acquires).
%Figure \ref{fig:mnist_fixed_budget} shows the accuracy of MNIST classification with different acquisition budget. 
We can see (Fig.~\ref{fig:mnist_fixed_budget}) GSMRL outperforms the greedy policy under all budgets.
%, which verifies the benefits of using the non-greedy policy.
Moreover, we see that the agent is able to correctly assess whether or not more acquisitions are useful, since it obtains better performance when it dictates when to predict with the termination action.
% (see Fig.~\ref{fig:model-based}).

\vspace{-2pt}
\section{Conclusion}
In this work, we reformulate the dynamics of the AFA MDP as a generative modeling task among features.
%In this work, we follow the MDP formulation for the AFA problem and reformulate the dynamics as a generative modeling task. 
We leverage a generative surrogate model to capture the state transitions across arbitrary feature subsets. The surrogate model also provides auxiliary information and intermediate rewards to assist the agent. Our \method model essentially combines model-based and model-free approaches. We conduct a large scale study to evaluate our model on both supervised and unsupervised AFA problems. Our model achieves state-of-the-art performance on both problems. In future work, we will explore AFA in spatial-temporal setting with continuously indexed features.

\bibliography{main}
\bibliographystyle{icml2021}

\clearpage
\appendix

\renewcommand\thefigure{\thesection.\arabic{figure}}
\renewcommand\thetable{\thesection.\arabic{table}}
\renewcommand{\theequation}{\thesection.\arabic{equation}}

\setcounter{figure}{0} 
\setcounter{subfigure}{0}
\setcounter{equation}{0}

\section{Policy Invariance under Intermediate Rewards}\label{sec:invariance}
Assume the original Markov Decision Process (MDP) without the intermediate rewards is defined as $M = (S,A,T,\gamma,R)$, where $S$ and $A$ are state and action spaces, $T$ is the state transition probabilities, $\gamma$ is the discount factor, and $R$ is the rewards. When we introduce the intermediate rewards $R_m$, the MDP is modified to $M'=(S,A,T,\gamma,R')$, where $R'=R+R_m$. The following theory provides a sufficient and necessary condition for the modified MDP $M'$ to achieve the same optimal policy as the original MDP $M$.

\begin{theorem}
The modified MDP $M'=(S,A,T,\gamma,R+F)$ with any shaping reward function $F$ is guaranteed to be consistent with the optimal policy of the original MDP $M=(S,A,T,\gamma,R)$ if the shaping function $F$ have the following form
\begin{equation}
    F(s,a,s') = \gamma \Phi(s') - \Phi(s),
\end{equation}
where $\Phi: S \rightarrow \mathbb{R}$ is a potential function evaluated on states. For infinite-state case (i.e., the state space is an infinite set) the potential function is additionally required to be bounded.
\end{theorem}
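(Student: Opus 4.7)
The plan is to prove sufficiency by showing that optimal Q-functions of the original and shaped MDPs differ by a state-dependent offset, which does not affect the argmax over actions. This is the classical potential-based shaping argument (Ng, Harada, Russell 1999) specialized to the AFA setting, so I would structure the proof around a direct Bellman-equation manipulation rather than anything MDP-specific.

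First, I would write down the Bellman optimality equation for $M$:
\begin{equation*}
Q_M^*(s,a) = \mathbb{E}_{s'\sim T(\cdot|s,a)}\bigl[R(s,a,s') + \gamma \max_{a'} Q_M^*(s',a')\bigr],
\end{equation*}
and the analogous equation for $M'$ with $R$ replaced by $R+F$. Then I would \emph{guess} the closed-form relation
\begin{equation*}
Q_{M'}^*(s,a) = Q_M^*(s,a) - \Phi(s),
\end{equation*}
and verify it by substitution. Plugging $F(s,a,s') = \gamma\Phi(s') - \Phi(s)$ into the shaped Bellman equation, the $-\Phi(s)$ term pulls outside the expectation (independent of $s'$), and the $\gamma\Phi(s')$ term combines with $\gamma\max_{a'} Q_{M'}^*(s',a') = \gamma \max_{a'}[Q_M^*(s',a') - \Phi(s')]$ so that the $\gamma\Phi(s')$ contributions cancel inside the expectation, reducing the right-hand side to $Q_M^*(s,a) - \Phi(s)$, as required.

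From this identity the conclusion is immediate: for every state $s$,
\begin{equation*}
\arg\max_a Q_{M'}^*(s,a) = \arg\max_a \bigl[Q_M^*(s,a) - \Phi(s)\bigr] = \arg\max_a Q_M^*(s,a),
\end{equation*}
since $\Phi(s)$ does not depend on $a$. Hence every policy optimal in $M$ is optimal in $M'$ and vice versa, which is the claimed policy invariance. For the specific AFA reward shapings used in the paper, namely the prediction-entropy reduction in \eqref{eq:info_gain} and the per-dimension NLL reduction in \eqref{eq:bpd_gain}, I would then exhibit the potential functions $\Phi(s) = H(y \mid x_o)$ and $\Phi(s) = -\log p(x_u \mid x_o)/|u|$ respectively, and verify that $r_m(s,i) = \gamma \Phi(s') - \Phi(s)$ holds after the deterministic transition $o \to o\cup\{i\}$.

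The main obstacle is the infinite-state case, where the above ``verify the guess'' argument must be backed by a uniqueness statement for the Bellman fixed point. The hypothesis that $\Phi$ is bounded guarantees that the shaped reward $R+F$ remains bounded, so the Bellman optimality operator for $M'$ is a $\gamma$-contraction on the Banach space of bounded functions on $S \times A$ and admits a unique fixed point; this is what legitimizes concluding $Q_{M'}^*(s,a) = Q_M^*(s,a) - \Phi(s)$ from the verified equation rather than merely exhibiting one solution among possibly many. I would state this contraction argument briefly and invoke Banach's fixed-point theorem to close the proof.
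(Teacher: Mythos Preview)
Your argument is correct and is precisely the Ng--Harada--Russell (1999) potential-based shaping proof that the paper itself defers to; the paper's own proof consists solely of the citation to \citet{ng1999policy}, so there is nothing to contrast. One small slip in your specialization to the AFA rewards: the potentials should carry the opposite sign, namely $\Phi(s) = -H(y \mid x_o)$ and $\Phi(s) = \log p(x_u \mid x_o)/|u|$, so that $\gamma\Phi(s') - \Phi(s)$ reproduces $r_m$ in \eqref{eq:info_gain} and \eqref{eq:bpd_gain} with the correct sign (this is also how the paper states it immediately after the theorem).
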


\begin{proof}
Please refer to \citet{ng1999policy} for detailed proof.
\end{proof}

From the above theorem, we can see our intermediate rewards in \eqref{eq:info_gain} is a potential based shaping function and the potential function is $\Phi(s) = - H(y \mid s)$.
For classification task where $y \in \mathcal{Y} = \{1, 2,\dots,K\}$ is a discrete variable, the entropy is naturally bounded, i.e., $0 \leq H(y \mid s) \leq log \vert \mathcal{Y} \vert$, where $\vert \mathcal{Y} \vert$ is the cardinality of the label space.
For regression task where $y \in \mathbb{R}$, the entropy is bounded by $0 \leq H(y \mid s) \leq H(y \mid \emptyset)$. The upper bound $H(y \mid \emptyset)$ is determined by the given surrogate model. Similarly, for the intermediate rewards in \eqref{eq:bpd_gain}, the potential function $\Phi(s) = \frac{\log p(x_u \mid s)}{|u|}$ is also bounded for a given surrogate model.

\begin{algorithm*}[t]
\SetAlgoLined
 1. load pretrained surrogate model $M$, agent $\textit{agent}$ and prediction model $f_\theta(\cdot)$\;
 
 2. instantiate an environment with data $D$: $\textit{env}=\text{Environment}(D)$\;
 
 3. $x_o$, o = \textit{env}.reset(); \tcp{$o=\emptyset$}
 
 4. done = False; reward = 0
 
 \While{not done}{
  aux = $M$.query($x_o$, o); \tcp{query M for auxiliary information}
  \tcp{aux contains the prediction $\hat{y} \sim p(y \mid x_o)$ and output likelihoods,}
  \tcp{the imputed values $\hat{x}_u \sim p(x_u \mid x_o)$ and their uncertainties,}
  \tcp{and estimated utilities $\mathcal{U}_i$ for each $i \in u$ (\eqref{eq:utility}).}
  action = \textit{agent}.act($x_o$, o, aux); \tcp{act based on the state and auxiliary info}
  $r_m$ = $M$.reward($x_o$, $o$, action); \tcp{calculate intermediate rewards with the surrogate model}
  $x_o$, o, done, r = \textit{env}.step(action); \tcp{take a step based on the action}
  \tcp{if action indicates termination: done=True, r=-$\mathcal{L}(\hat{y}(x_o), y)$}
  \tcp{else: done=False, r=$-\alpha \mathcal{C}(action)$, $o=o\cup action$}
  reward = reward + r + $r_m$; \tcp{accumulate rewards}
 }
 prediction = $\textit{agent}$.predict($x_o$, $o$, aux); \tcp{make a final prediction}
 \tcp{using either $M$.predict($x_o$, o, aux) or $f_\theta$($x_o$, o, aux) based on validation}
 \caption{Active Feature Acquisition with \method}
 \label{alg:RL_supp}
\end{algorithm*}

\section{Experiments}\label{sec:experiment}
\subsection{Classification}
For classification tasks, we conduct experiments on MNIST and two UCI datasets. We downsample the MNIST images to $16\times16$ to reduce the total number of features in order to accommodate baselines such as \cite{ma2018eddi}, which had trouble scaling. Features are normalized into the range $[0, 1]$.

The surrogate model for classification task estimate arbitrary conditional distributions that are conditioned on the target variable $y$. For MNIST, we stack conditional coupling transformations and a conditional Gaussian likelihood module. 
%The surrogate model for classification task is a conditional extension of ACFlow, where the arbitrary conditional distributions are conditioned on the target variable $y$. The ACFlow model for MNIST is similar to the model they used in their original work \citep{li2019flow}, which contains a stack of conditional coupling transformations and a conditional Gaussian likelihood module. 
%We additionally condition them on the one-hot encoding of the target $y$.
For UCI dataset, we use an autoregressive likelihood module. To train the surrogate model, we randomly select two non-overlapping subsets $u$ and $o$ and optimize the arbitrary conditional log likelihood
\begin{equation}
\begin{split}
    &\log p(y, x_u \mid x_o) = \log p(x_u \mid x_o) + \log P(y \mid x_u, x_o)\\
    &= \log p(x_u \mid x_o) + \log \frac{p(x_u,x_o \mid y)P(y)}{\sum_{y'}p(x_u, x_o \mid y')P(y')}.
\end{split}
\end{equation}

The agent is implemented as a PPO policy. Given the current state $x_o$ and the auxiliary information from the surrogate model, we extract a set embedding using set transformer \citep{lee2019set}. The inputs are first transformed to sets by concatenating with the one-hot encoding of their indexes. The set embedding is beneficial to deal with arbitrary dimensionality of the inputs. The policy network then takes the set embedding as inputs and outputs the next action. The critic network takes the same set embedding as inputs and output an estimate of the state values. To help the agent extract meaningful representations from its inputs, we let the prediction model $f_\theta$ take the same set embedding as input. The policy network, the critic network and the prediction function are all implemented as fully connected layers.

We run the baseline model JAFA \citep{shim2018joint} using their public code. We cross-validate the optimal architecture by modifying the number of layers and the size of each layer for both the agent and the classifier.

We adapt EDDI \citep{ma2018eddi} to perform classification task by modifying the decoder to output Categorical distribution for $y$ and Gaussian distribution for $x$. EDDI learns the distribution $p(y, x_o)$ by utilizing a VAE based model. The acquisition metric for EDDI is
\begin{equation}\label{eq:eddi}
\begin{aligned}
    \mathcal{U}_i = &\mathbb{E}_{x_i \sim p(x_i \mid x_o)} \KL[p(z \mid x_i,x_o) \| p(z|x_o)] \\ &- \mathbb{E}_{y,x_i \sim p(y,x_i \mid x_o)} \KL[p(z \mid y,x_i,x_o) \| p(z \mid y,x_o)],
\end{aligned}
\end{equation}
which is estimated using the proposal distribution. Then, a greedy policy that acquires the feature with maximum utility is employed. We similarly cross-validate the architecture for each dataset.

We also compare to a greedy policy using the surrogate model where the utility is calculated by \eqref{eq:cmi_discrete}. At each acquisition step, the one with maximum utility is selected.

\subsection{Regression}
For regression task, the target variable $y$ is concatenated into the features $x$ and the surrogate model learns the distribution $p(y, x_u \mid x_o)$. The agent is similarly implemented as the PPO policy with a set transformer based feature extractor. Baseline models include JAFA and EDDI, where the architecture is selected by cross validation. We also build a greedy policy using our surrogate model by estimating the utility following \eqref{eq:cmi_continuous}.
For \method~and JAFA, the reward for a prediction $\hat{y}$ is calculated as the negative MSE $-\Vert \hat{y}-y\Vert_2^2$.

\subsection{Medical Diagnosis}
We evaluate our model on Physionet challenge 2012 dataset \cite{goldberger2000physiobank}. We first preprocess the dataset by removing some non-relevant features (such as patient ID) and eliminating the instances with very high missing rate (larger than 80\%). The features are then normalized to the range of [0,1]. The model and baselines are mostly the same as the classification experiments for UCI dataset. Since the classses are heavily imbalanced, we use weighted cross entropy as loss and reward. To evaluate the performance for data with missing entries, We first impute those missing features with our GSM model. For EDDI, the missing entries are similarly imputed by the VAE model. JAFA does not have a generative component, thus we simply replace the missing features with zeros. JAFA reports the AUC score for this dataset, but AUC is known inappropriate for imbalanced classification \cite{brabec2018bad}. We instead report the F1 scores for this experiment.

\subsection{Time Series}\label{sec:time_series}
Acquiring features for time series data requires the agent to integrate chronological constraints into the action space. For RL based approach, we manually set the probabilities of invalid action to zeros. For greedy approach, inspired by Thompson sampling \citep{thompson1933likelihood,russo2017tutorial}, we employ a prior distribution to encode our chronological constraint. Specifically, we set the prior as a Dirichlet distribution that is biased towards the selection of earlier time steps:
\begin{equation}
\begin{aligned}
    \pi(\rho) = \text{Dir}\left[\right.&\alpha(T- (\max(o)+1)),\\ &\ldots, \alpha(T-(T-1)) \left.\right](\rho),
\end{aligned}
\end{equation}
where $\alpha$ is a hyperparameter, $T$ is the total time steps, $\max(o)$ represents the latest time step already acquired, and $\rho$ is a distribution for acquisition over the remaining future time steps.
However, we still desire that the acquired features are informative for target $y$. Hence, we update the prior to a posterior using time steps $V$ that are drawn according to how informative they are:
\begin{equation}
\begin{split}
    &p(V_n=t) \propto \exp(I(x_t ; y \mid x_o)),\\
    &t \in \{\max(o)+1,\ldots, T-1 \},\ n \in \{1,\ldots,N\},
\end{split}
\end{equation}
where $N$ is the number of samples.
Due to conjugacy, the posterior is also a Dirichlet distribution
\begin{equation}
\begin{aligned}
    p(\rho \mid V) = \text{Dir}\Bigg[\Bigg.\alpha(T- (\max(o)+1)) \\
    + \sum_{n=1}^{N} \mathbb{I}\{V_n=\max(o)+1\}, \ldots  \Bigg.\Bigg](\rho).
\end{aligned}
\end{equation}
Samples from posterior represent the probabilities of choosing each candidate, which now prefer both earlier time steps and informative features. We draw a sample from posterior and select the most likely time step at each acquisition step.

\subsection{Unsupervised}
To perform active feature acquisition on unsupervised tasks, a.k.a, active instance recognition, we modify the reward for prediction as the negative MSE of the unobserved features, i.e., $-\Vert \hat{x}_u - x_u \Vert_2^2$, where $\hat{x}_u$ is the imputed values of the unobserved features. 
%The surrogate model is again an ACFlow model and the agent is similarly implemented as a PPO policy. 

The JAFA is adapted to this task by changing the classifier to an auto-encoder like model, where the observed features $x_o$ are encoded to predict the unobserved features $x_u$. 

For EDDI, by plugging $y=x$ into \eqref{eq:eddi}, we have the acquisition metric for this setting as
\begin{equation}
    \mathcal{U}_i = \mathbb{E}_{x_i \sim p(x_i \mid x_o)} \KL[p(z \mid x_i,x_o) \| p(z|x_o)],
\end{equation}
since the second KL term in \eqref{eq:eddi} equals to zero.

To build a greedy policy using our surrogate model, we estimate the utility using \eqref{eq:cmi_air}. Monte Carlo estimation is utilized to estimate the entropy.

\section{Hyperparameters}
We search the hyperparameters for both our \method~and baselines using cross-validation. The range of the hyperparameters is listed in Table~\ref{tab:hyperparameter}.

\begin{table}[h]
\centering
\small
\caption{Hyperparameters for \method~and baselines.}
\label{tab:hyperparameter}
\resizebox{\linewidth}{!}{
\begin{tabular}{|l|l|l|}
\hline
\multirow{9}{*}{GSMRL} & set transformer          & $\{32,64\}\times \{1,2\}$             \\ \cline{2-3}
                       & set embedding size       & ${\{32,64\}}$                         \\ \cline{2-3}
                       & policy network           & $\{32,64\} \times \{2,3\}$            \\ \cline{2-3} 
                       & critic network           & $\{32,64\} \times \{2,3\}$            \\ \cline{2-3} 
                       & prediction network       & $\{64,128\} \times \{2,3\}$           \\ \cline{2-3} 
                       & advantage $\lambda$      & 0.95                              \\ \cline{2-3} 
                       & discount factor $\gamma$ & 0.99                              \\ \cline{2-3} 
                       & PPO clip range           & $[0.8,1.2]$                       \\ \cline{2-3} 
                       & entropy coefficient      & 0.0                               \\ \hline
\multirow{3}{*}{JAFA}  & set embedding size       & $\{16,32,64,128\}$                  \\ \cline{2-3} 
                       & Q network                & $\{16,32,64,128\} \times \{2,3,4,5\}$ \\ \cline{2-3} 
                       & prediction network       & $\{16,32,64,128\}\times \{2,3,4,5\}$  \\ \hline
\multirow{4}{*}{EDDI}  & set embedding size       & $\{10,20,50,100\}$                  \\ \cline{2-3} 
                       & encoder                  & $\{32,64,128,256\}\times\{3,4,5,6\}$  \\ \cline{2-3} 
                       & latent code              & $\{10,20,50,100\}$                  \\ \cline{2-3} 
                       & decoder                  & $\{32,64,128,256\}\times\{3,4,5,6\}$  \\ \hline
\end{tabular}}
\end{table}

\section{Additional Results}\label{sec:results}
Due to the space limit, we only show one example for the acquisition process in the main text. Figure~\ref{fig:supp_mnist_afa} and \ref{fig:supp_mnist_air} show some additional examples for AFA and AIR tasks respectively.
In Fig.~\ref{fig:supp_mnist_afa_greedy} and \ref{fig:supp_mnist_air_greedy}, we present several examples of the acquisition process from the greedy policy. Note that the predictions for both the greedy and the non-greedy policy are from the same pretrained arbitrary conditioning model, therefore the only difference is the acquired features. Comparing the greedy and the non-greedy policy suggests that the non-greedy policy eliminates the prediction uncertainty much faster than the greedy one.

\begin{figure*}
    \begin{minipage}{0.485\textwidth}
    \centering
    \includegraphics[width=0.98\textwidth]{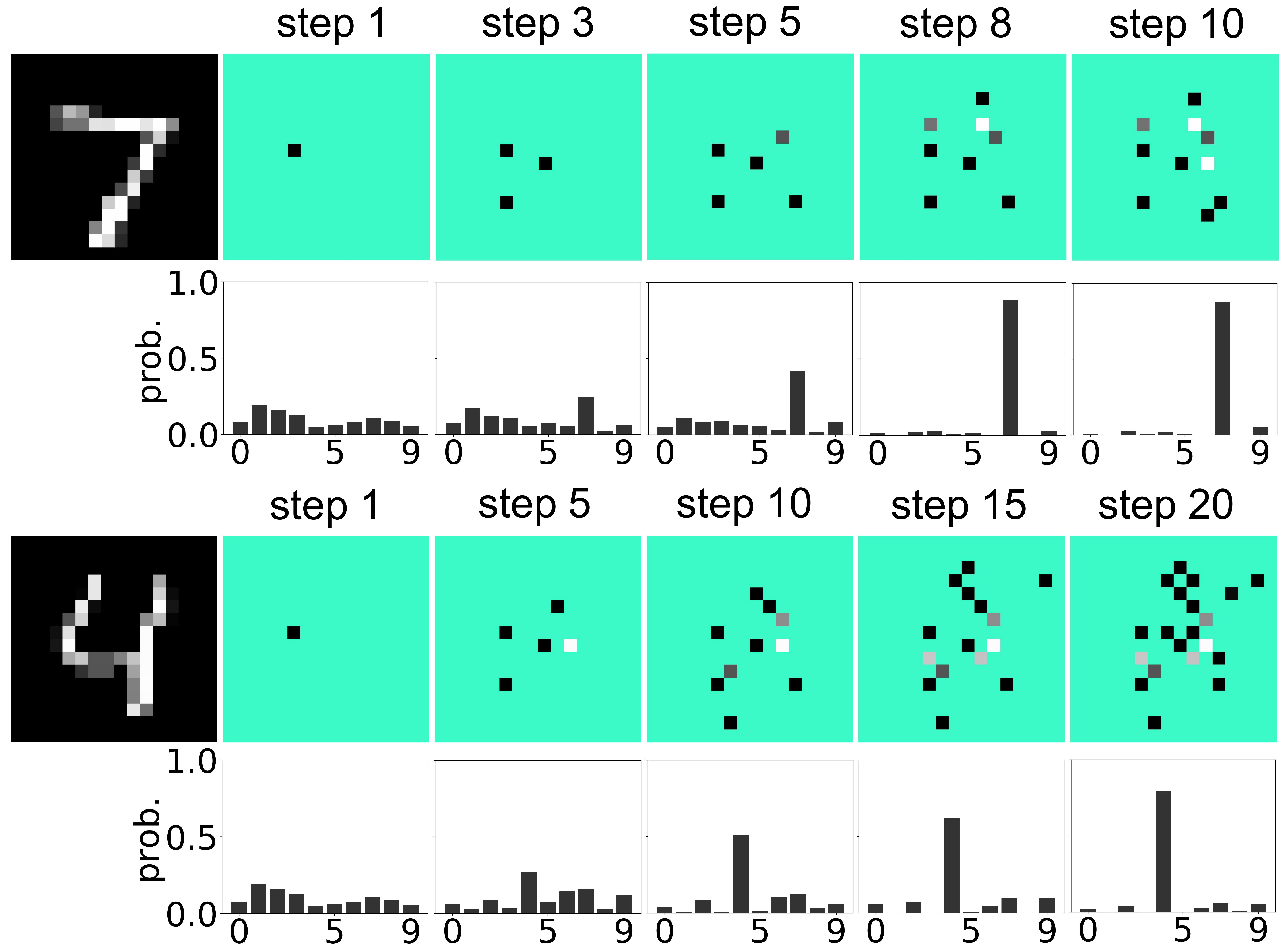}
    \caption{Examples of the acquisition process for AFA task from GSMRL.}
    \label{fig:supp_mnist_afa}
    \end{minipage}
    \quad
    \begin{minipage}{0.485\textwidth}
    \centering
    \includegraphics[width=0.98\textwidth]{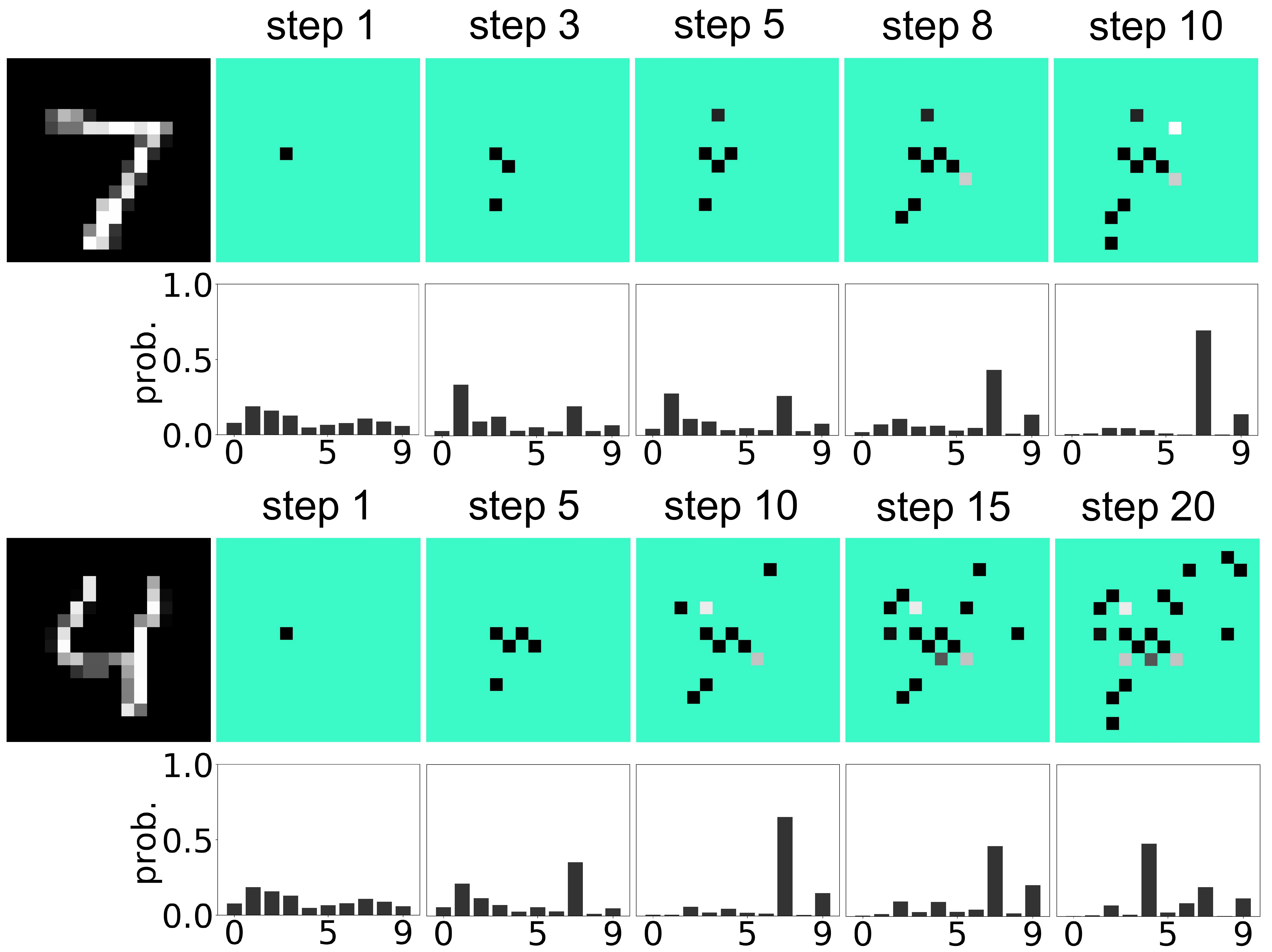}
    \caption{Examples of the acquisition process for AFA task from GSM+Greedy.}
    \label{fig:supp_mnist_afa_greedy}
    \end{minipage}
\end{figure*}

\begin{figure*}
    \begin{minipage}{0.485\textwidth}
    \centering
    \includegraphics[width=0.98\textwidth]{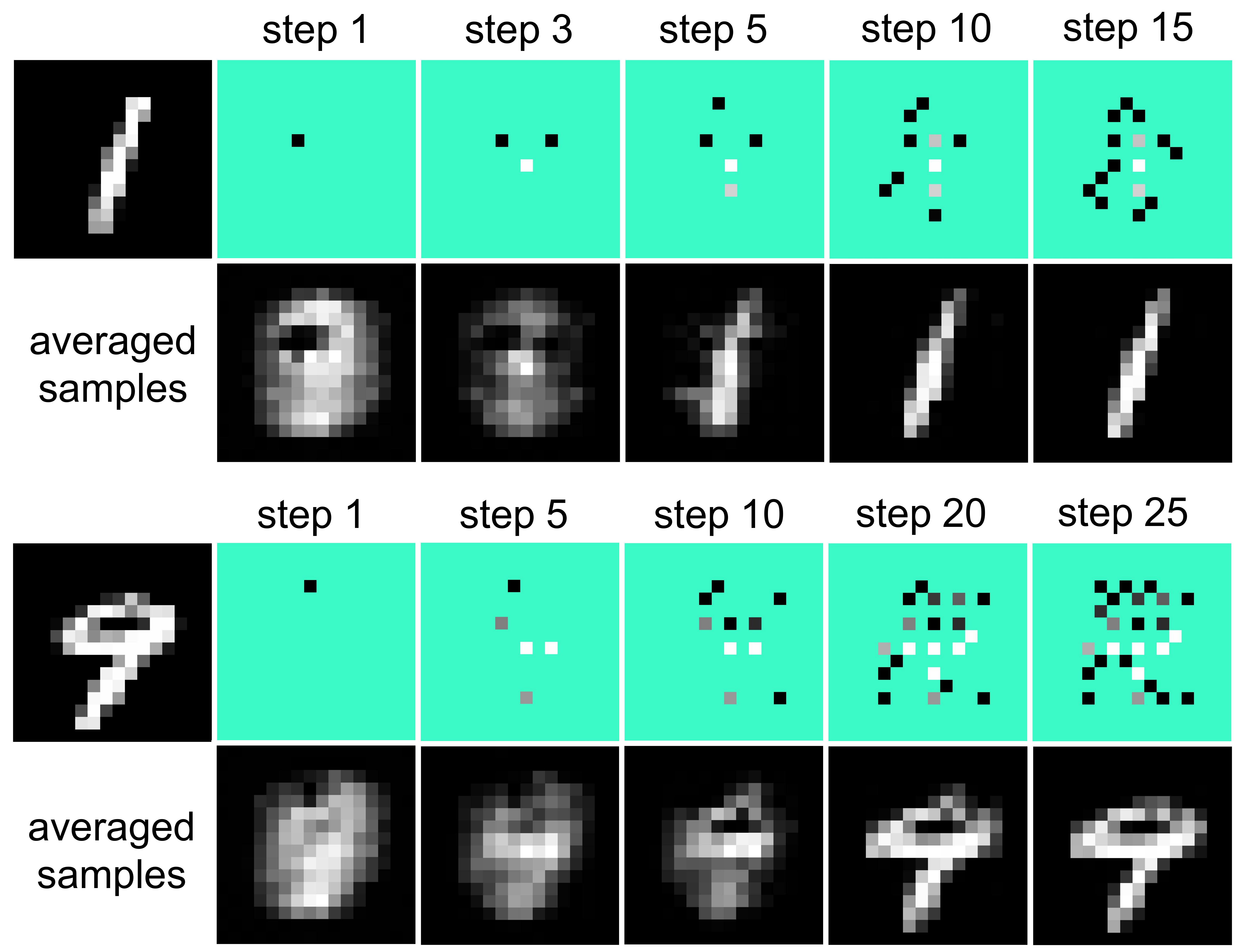}
    \caption{Examples of the acquisition process for AIR task from GSMRL.}
    \label{fig:supp_mnist_air}
    \end{minipage}
    \quad
    \begin{minipage}{0.485\textwidth}
    \centering
    \includegraphics[width=0.98\textwidth]{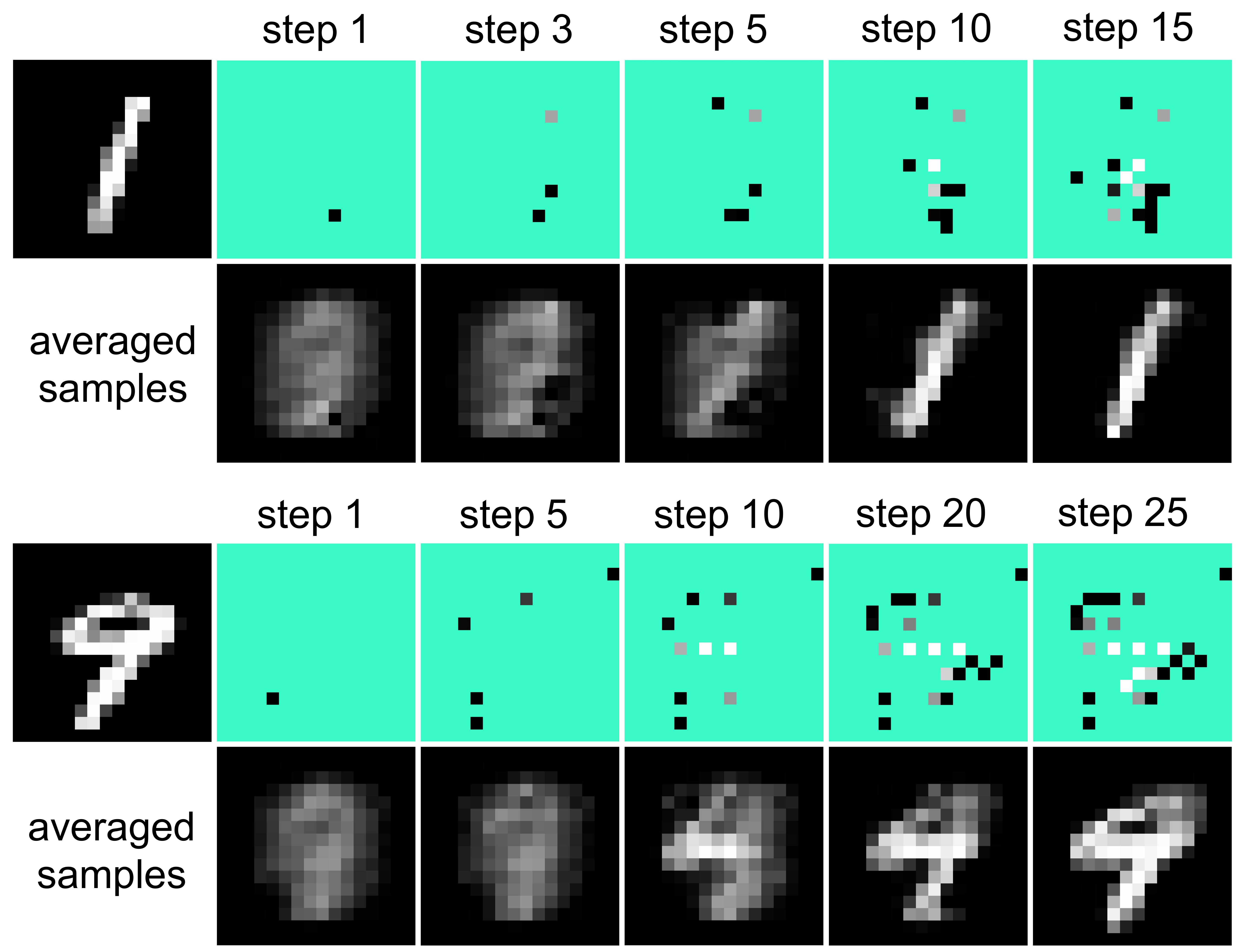}
    \caption{Examples of the acquisition process for AIR task form GSM+Greedy.}
    \label{fig:supp_mnist_air_greedy}
    \end{minipage}
\end{figure*}

In Fig.~\ref{fig:mnist_cls} and \ref{fig:mnist_air}, we present the acquired features from our GSMRL for several testing examples. To better understand the overall distribution of the acquired features across all the testing instances, we plot the frequencies of each feature being acquired in Fig.~\ref{fig:dist_afa} and \ref{fig:dist_air} for both AFA and AIR on MNIST respectively. A higher value of the frequency means the corresponding feature is acquired for more testing instances. Specifically, the frequency for a feature equals to one means the corresponding feature is a common feature acquired for all testing instances. The frequency loosely represents the importance of each feature, which could help with model interpretation and reasoning about decision making. We will explore this direction in future works.

\begin{figure*}
    \begin{minipage}{0.48\textwidth}
    \centering
    \includegraphics[width=0.98\textwidth]{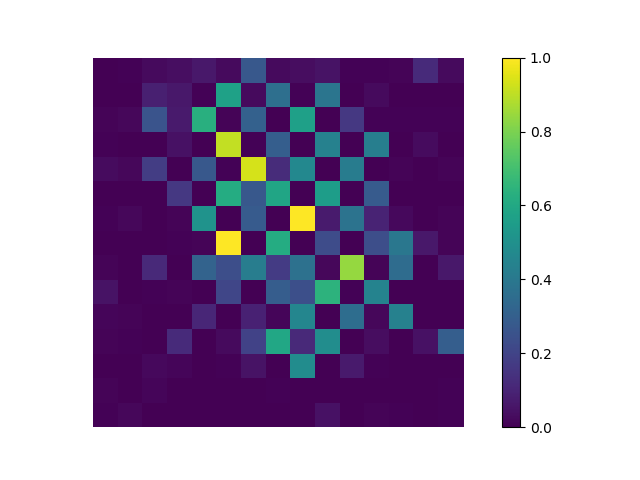}
    \caption{Acquisition frequency for AFA.}
    \label{fig:dist_afa}
    \end{minipage}
    \quad
    \begin{minipage}{0.48\textwidth}
    \centering
    \includegraphics[width=0.98\textwidth]{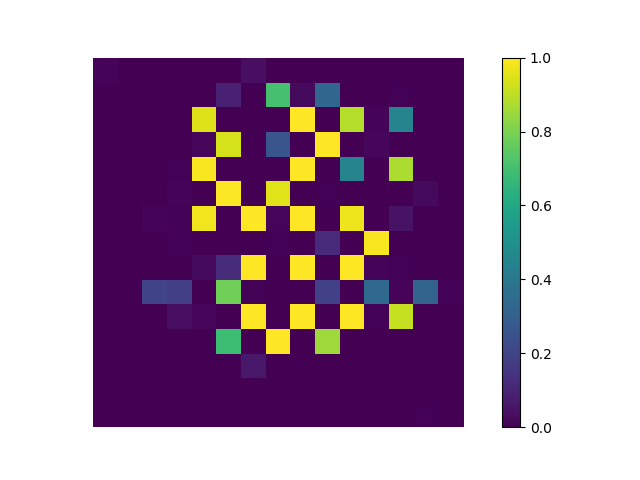}
    \caption{Acquisition frequency for AIR.}
    \label{fig:dist_air}
    \end{minipage}
\end{figure*}

In Fig.~\ref{fig:sensitivity}, we analyse the sensitivity of our model to random initialization by running our model three times independently with different random seeds. We report the mean and standard deviation for both the number of acquisitions and the task performance. Baseline performance are presented for reference. We can see that our model is robust to random initialization and performs consistently better than baselines.

\begin{figure*}
    \centering
    \subfigure[Classification]{
    \includegraphics[width=0.3\textwidth]{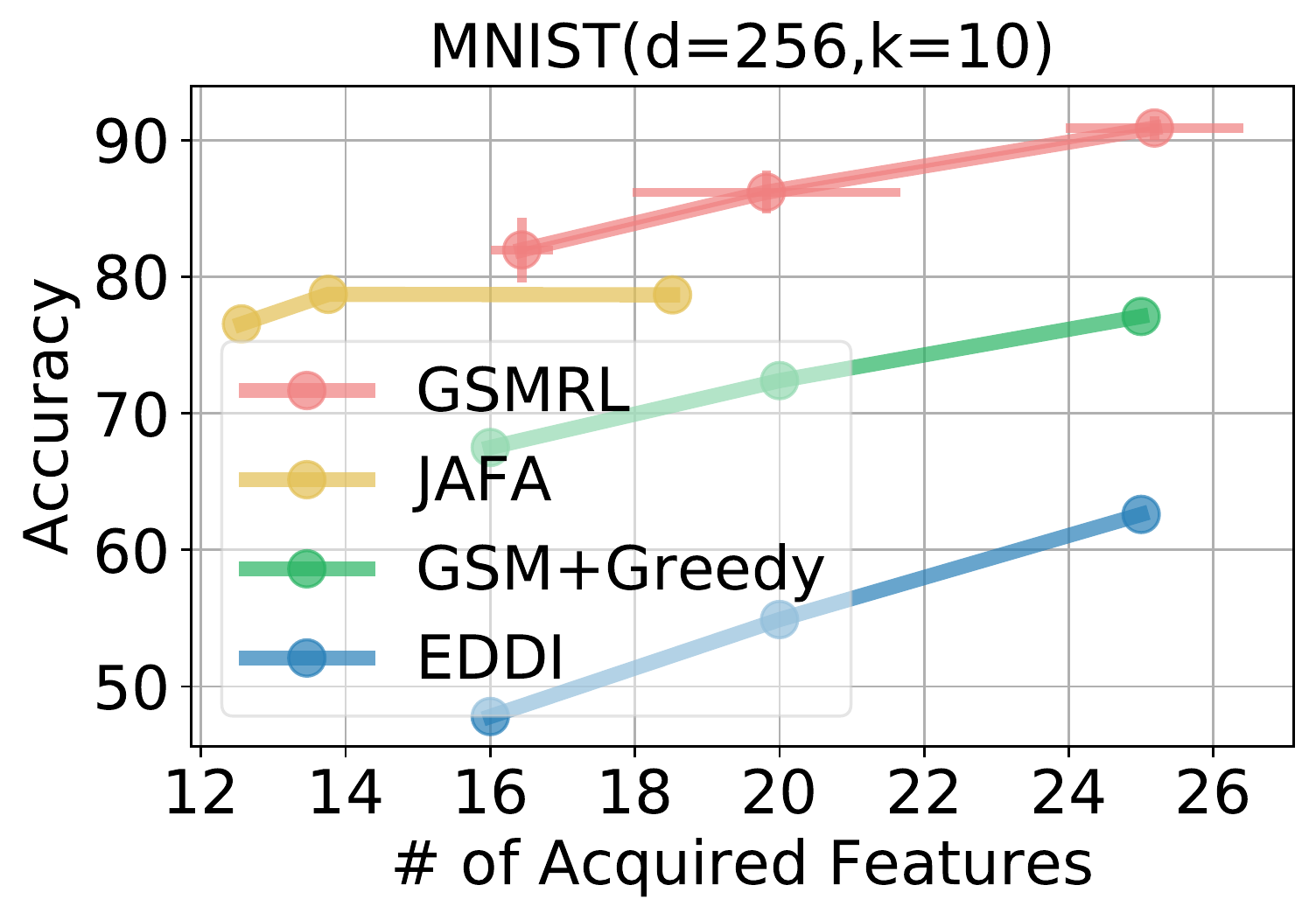}
    \includegraphics[width=0.3\textwidth]{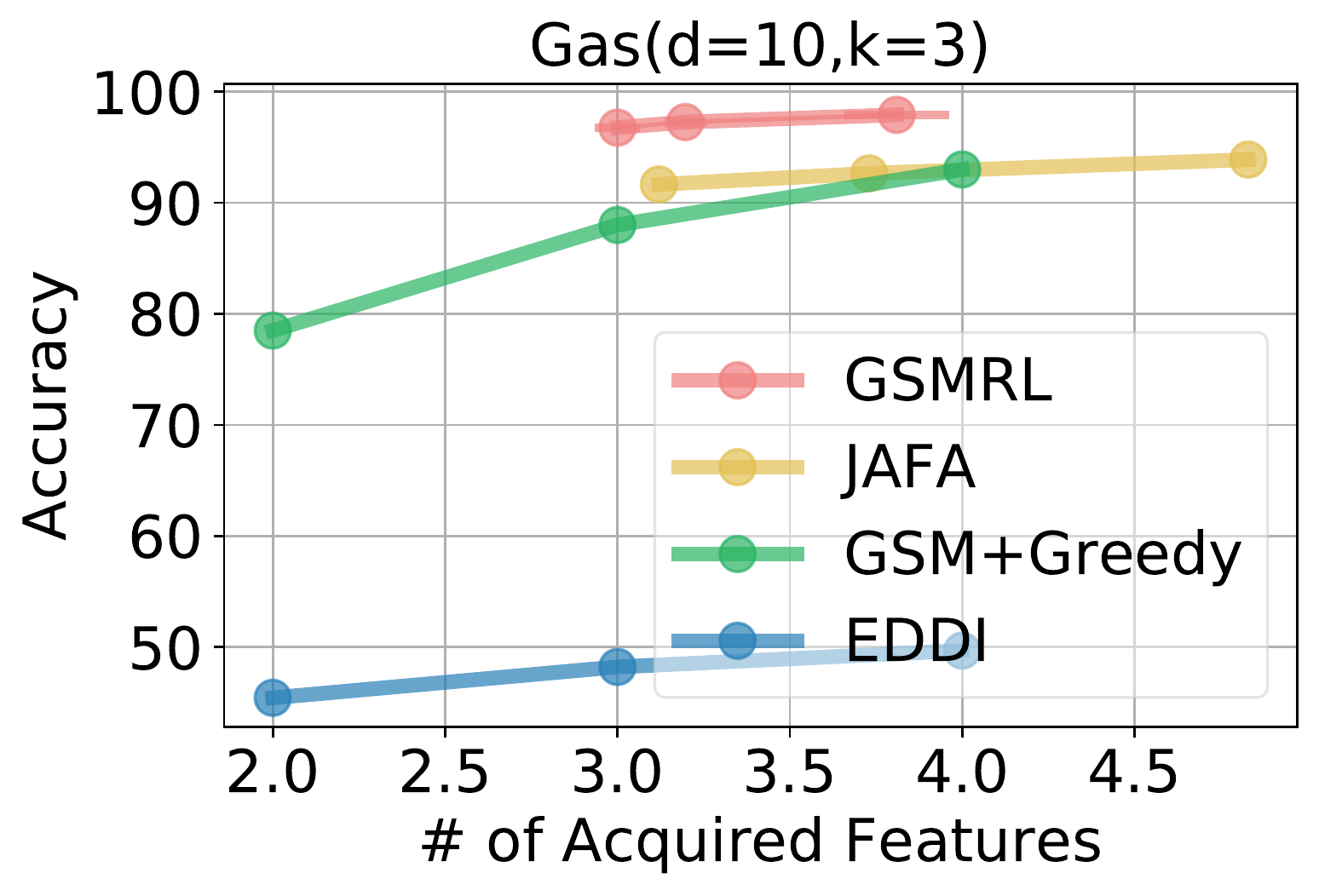}
    \includegraphics[width=0.3\textwidth]{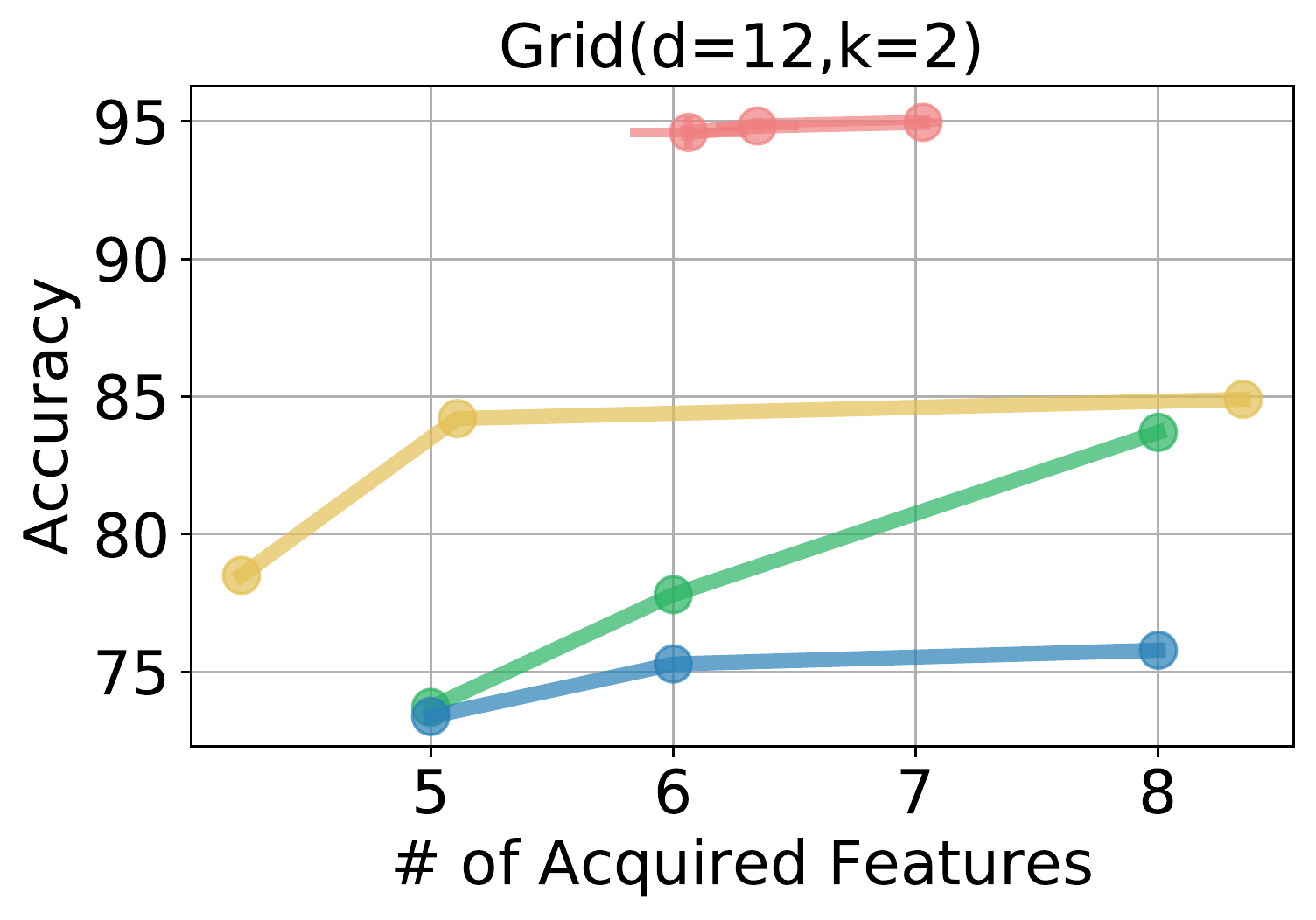}}
    \subfigure[Regression]{
    \includegraphics[width=0.3\textwidth]{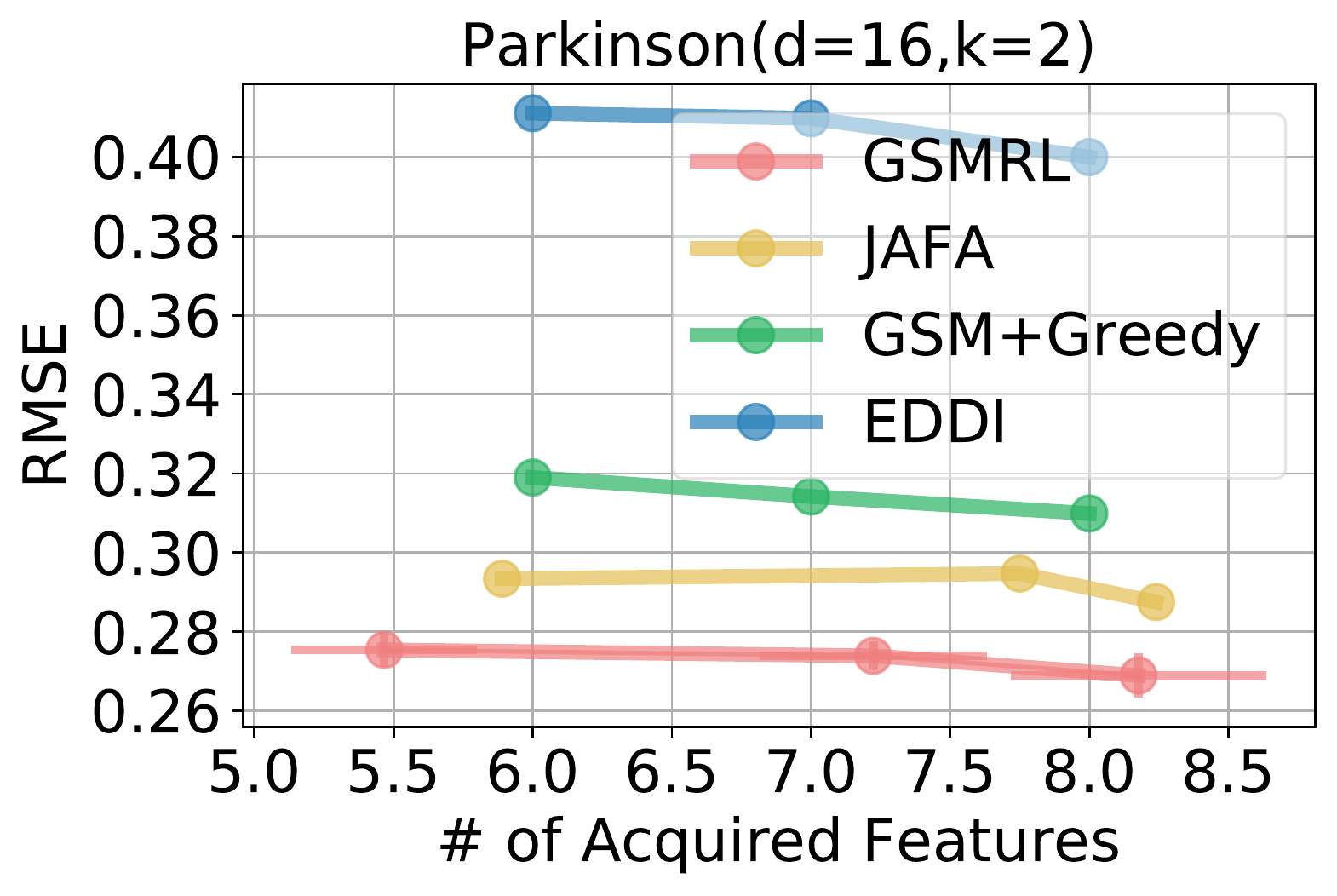}
    \includegraphics[width=0.3\textwidth]{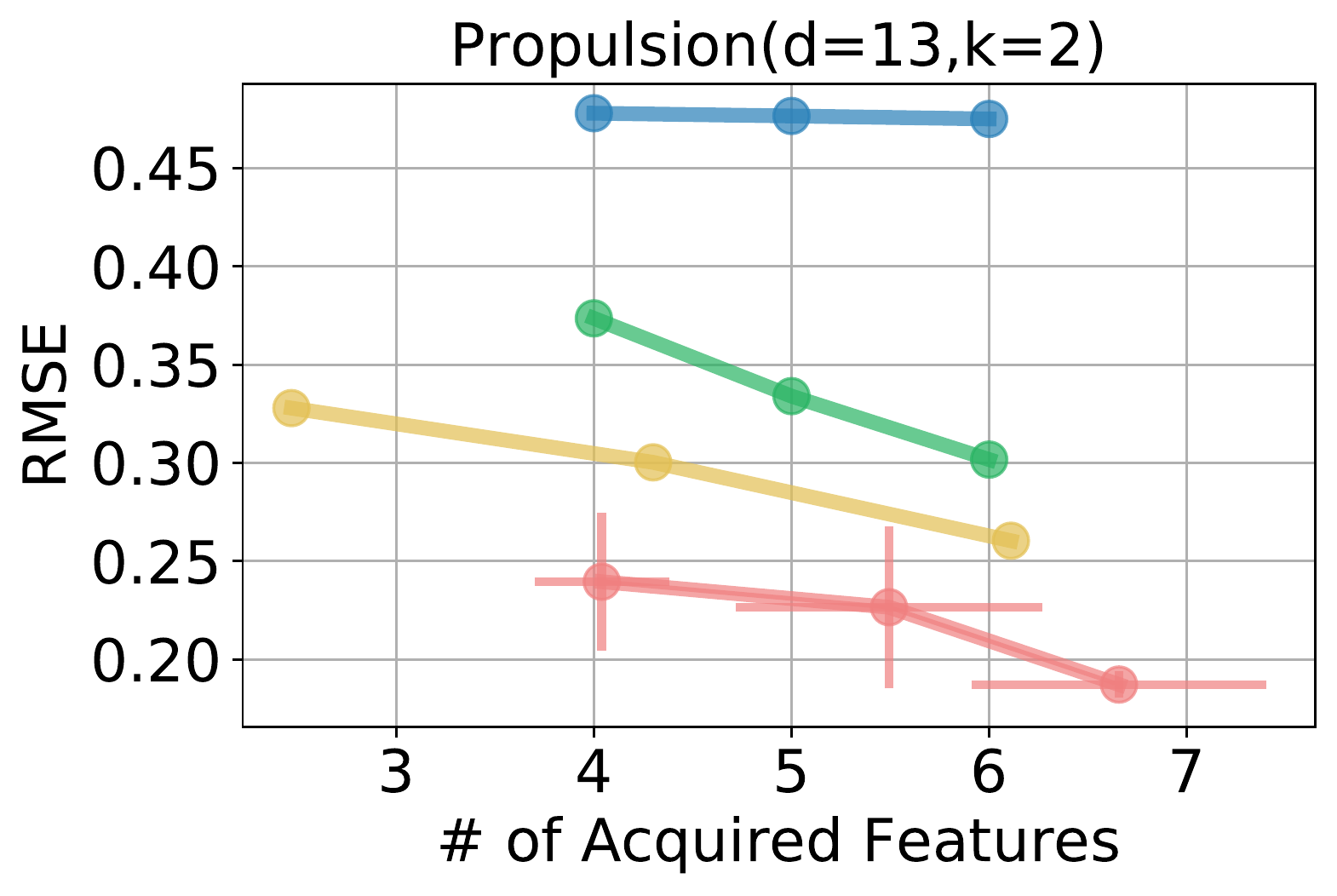}}
    \subfigure[Time Series]{
    \includegraphics[width=0.3\textwidth]{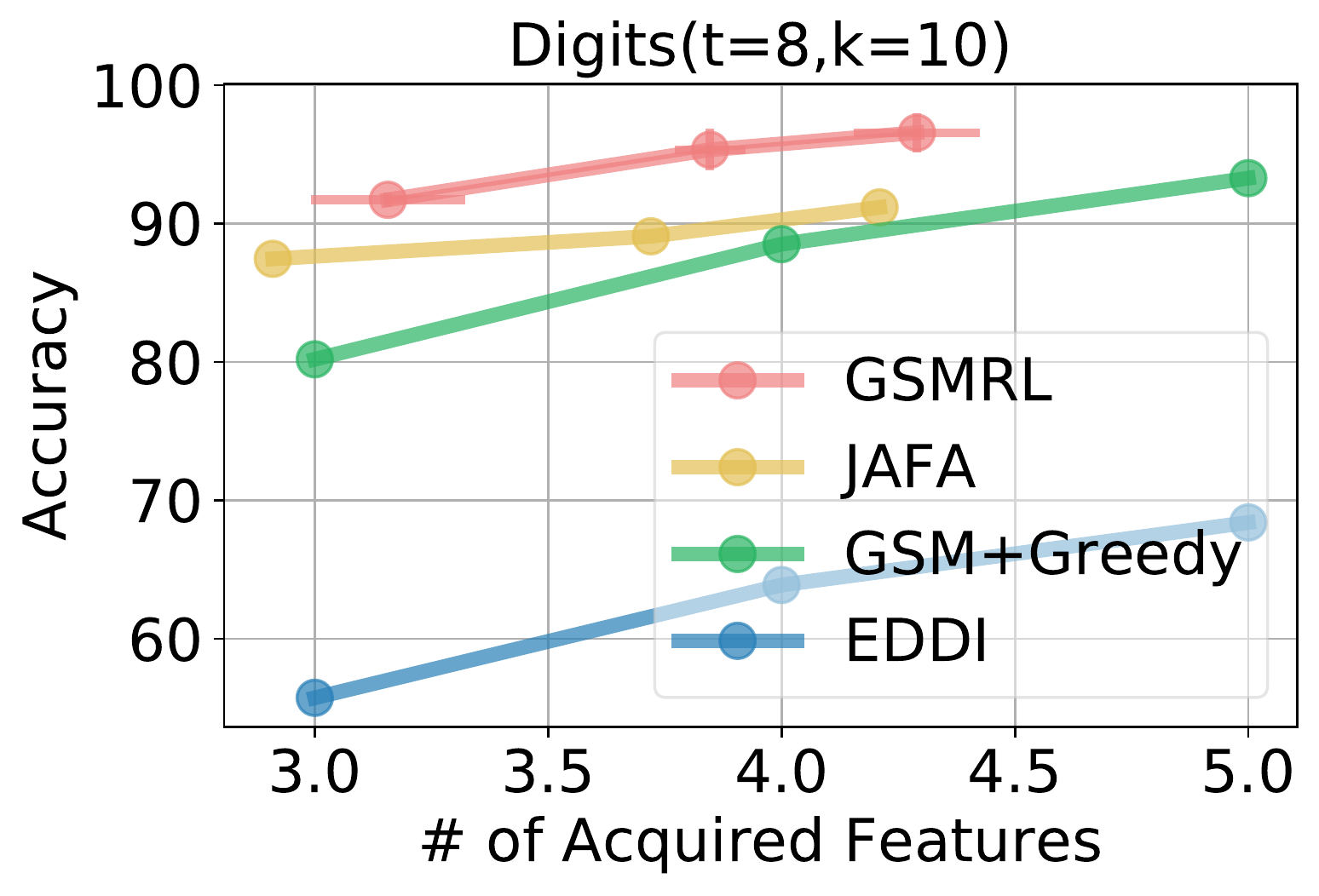}
    \includegraphics[width=0.3\textwidth]{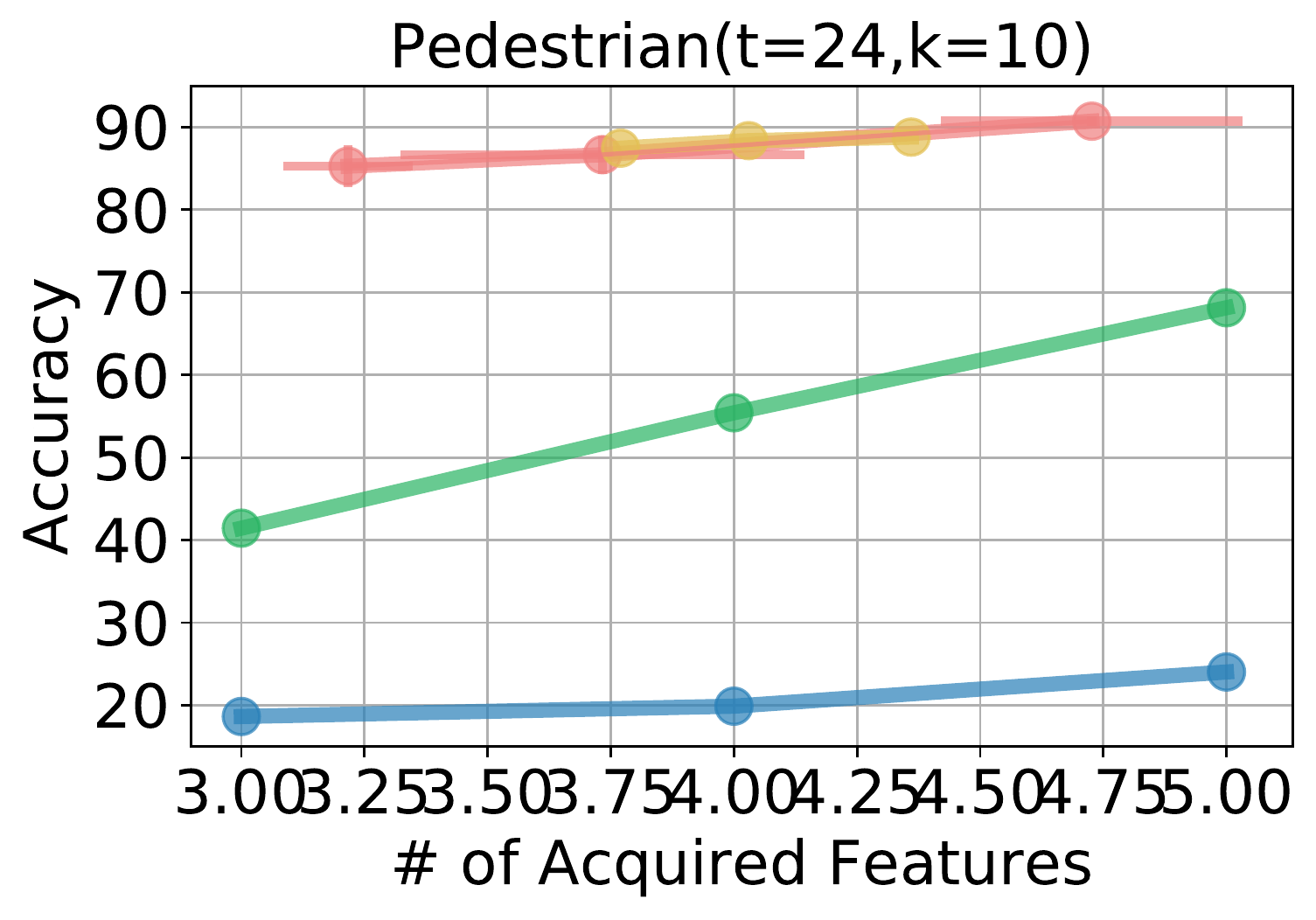}}
    \subfigure[Unsupervised]{
    \includegraphics[width=0.3\textwidth]{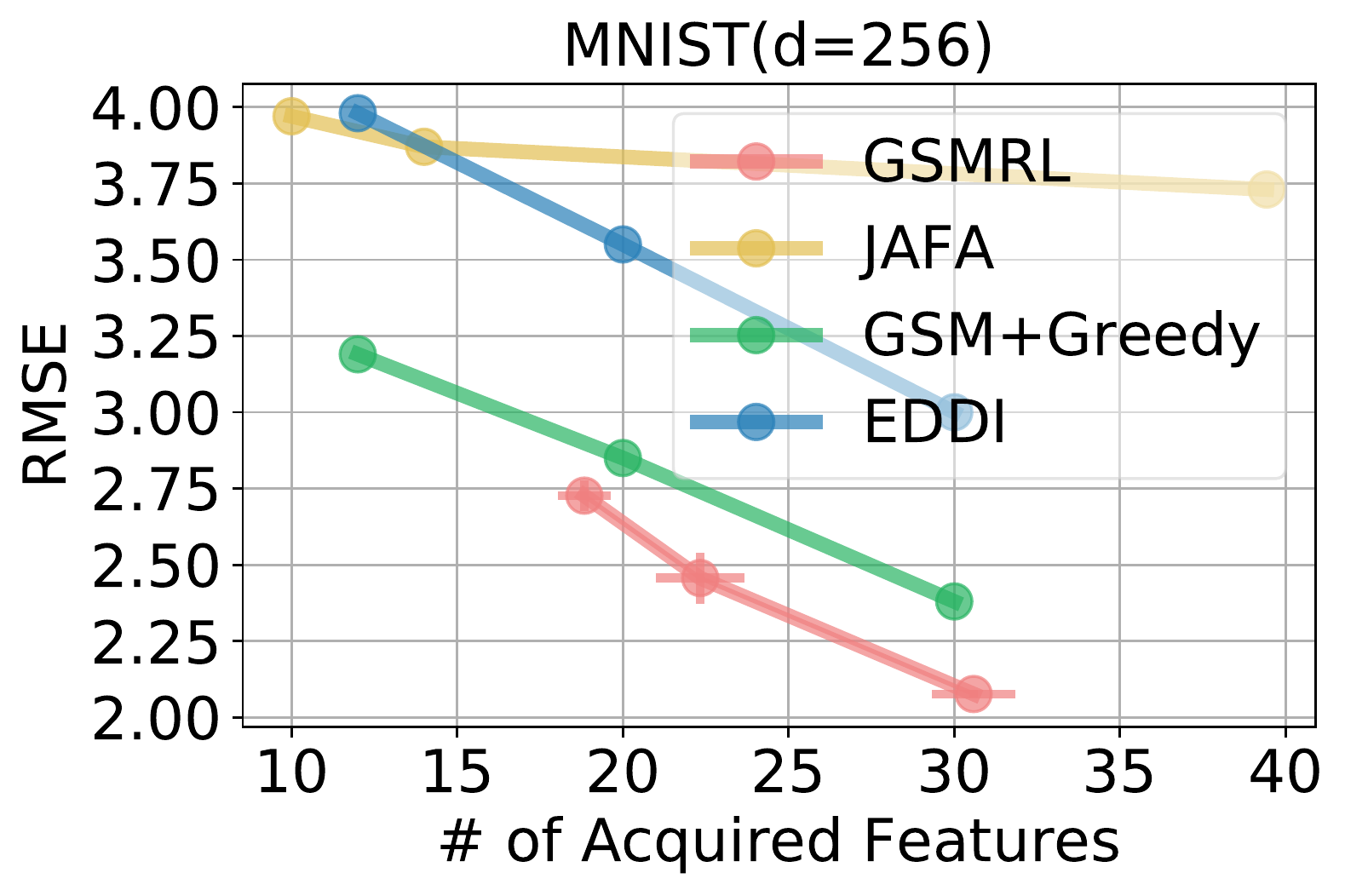}
    \includegraphics[width=0.3\textwidth]{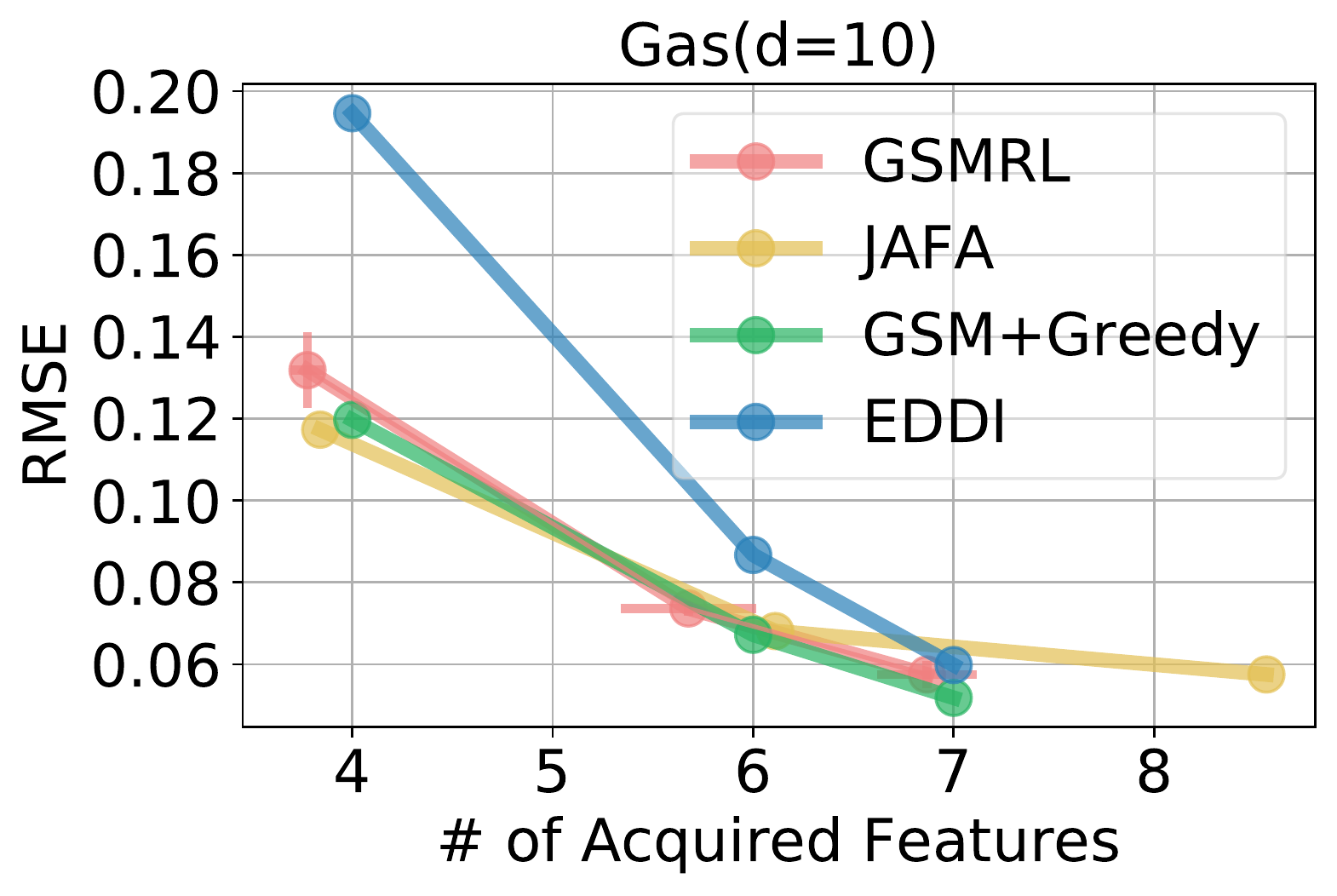}
    \includegraphics[width=0.3\textwidth]{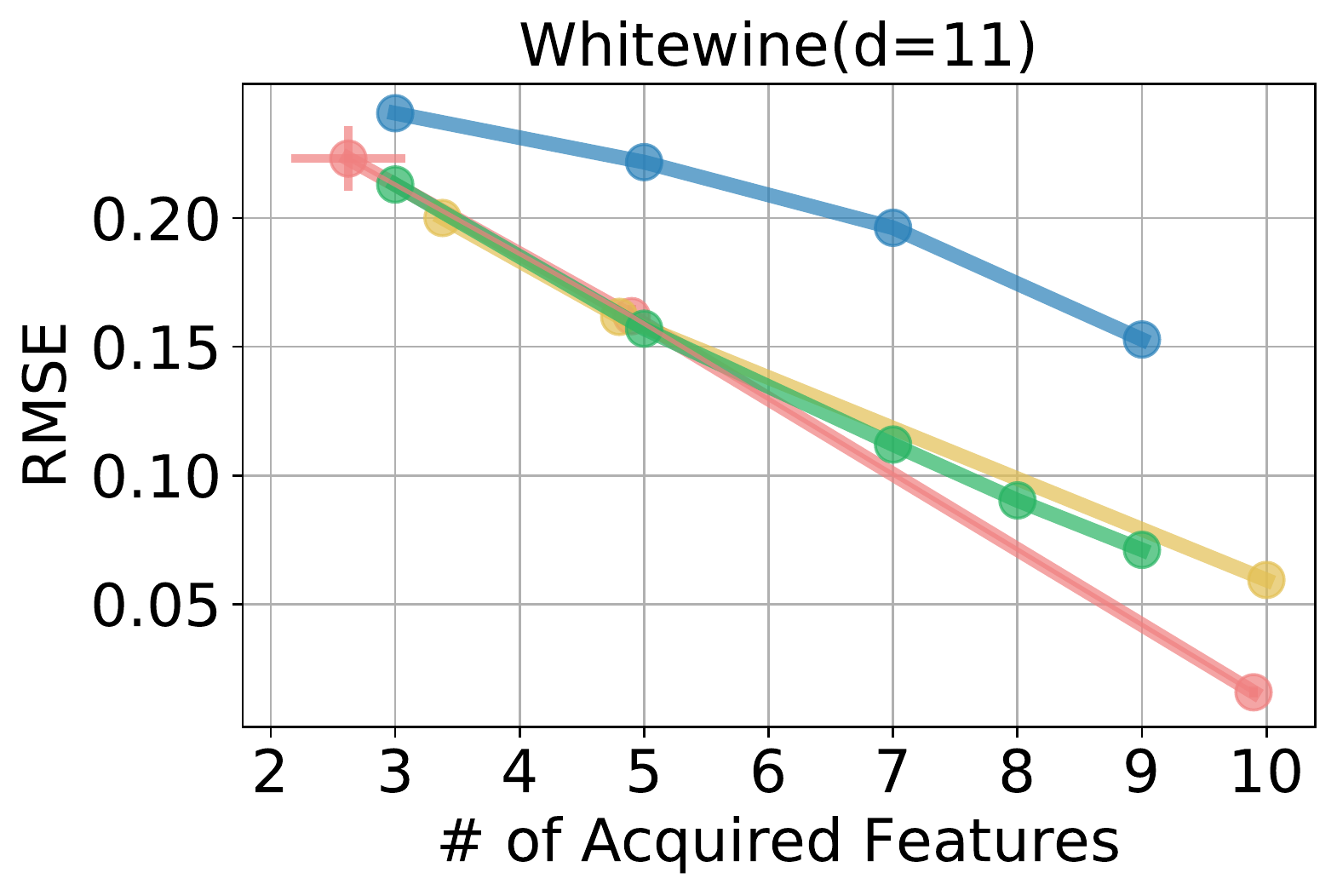}}
    \caption{Sensitivity analysis by running multiple times independently. Mean and standard deviation are reported for both the number of acquisitions and task performance.}
    \label{fig:sensitivity}
\end{figure*}

\textbf{Small vs. Large Action Space}
For the sake of comparison, we employ a downsampled version of MNIST in the experiment section. Here, we show that our GSMRL model can be easily scaled up to a large action space. We conduct experiments using the original MNIST of size $28 \times 28$. We observe that JAFA has difficulty in scaling to this large action space, the agent either acquires no feature or acquires all features. 
\begin{wrapfigure}{r}{0.5\linewidth}
\includegraphics[width=\linewidth]{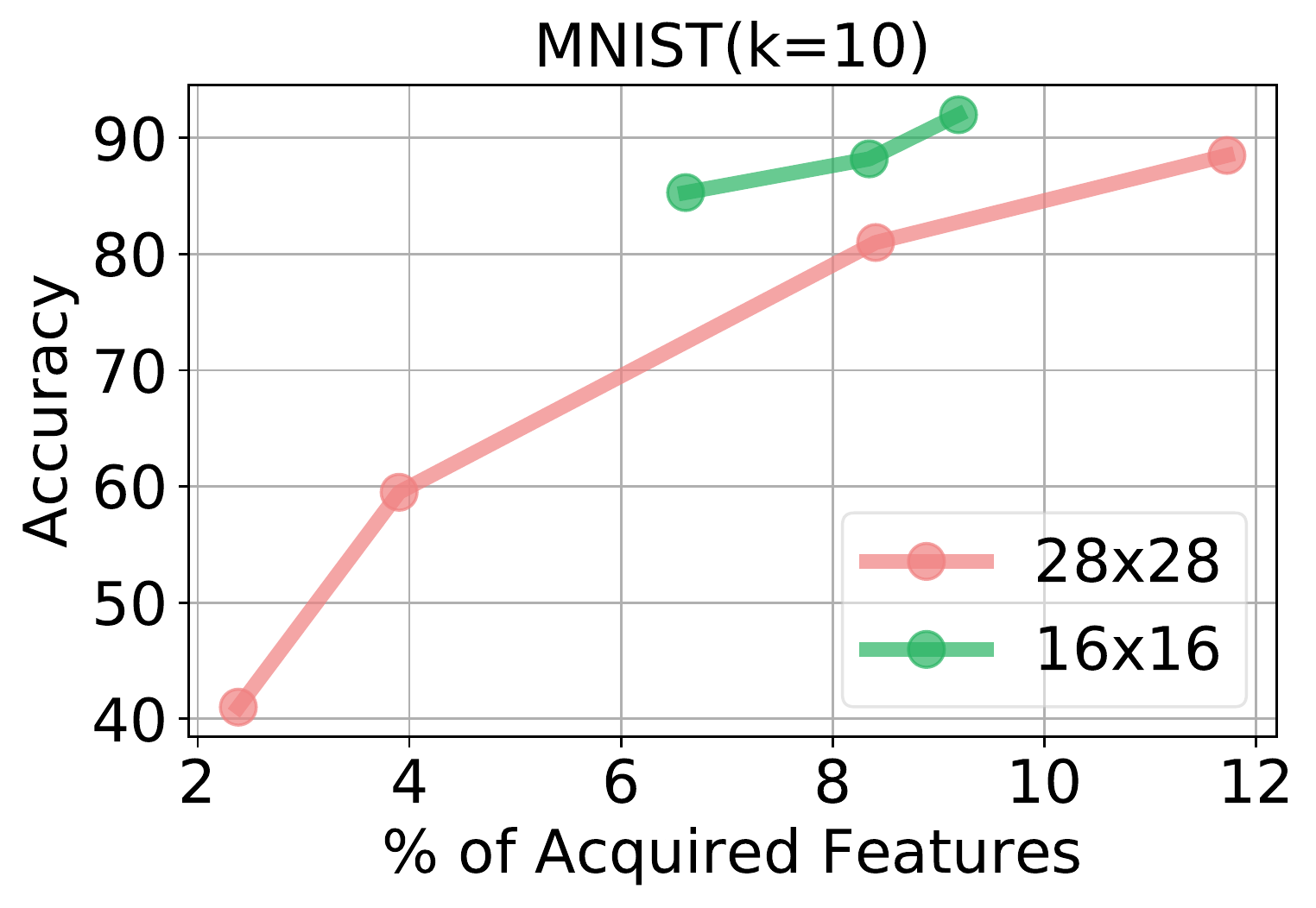}
\vspace{-20pt}
\caption{Acquisition with large action space.}
\label{fig:mnist_cls_large}
\end{wrapfigure}
The greedy approaches are also hard to scale, since at each acquisition step, the greedy policy will need to compute the utilities for every unobserved features, which incurs a total $O(d^2)$ complexity. In contrast, our GSMRL only has $O(d)$ complexity. Furthermore, with the help of the surrogate model, our GSMRL is pretty stable during training and converges to the optimal policy quickly. Fig.~\ref{fig:mnist_cls_large} shows the accuracy with a certain percent of features acquired. The task is definitely harder for large action space as can be seen from the drop in performance when the agent acquires the same percentage of features for both small and large action space, but our GSMRL still achieves high accuracy by only acquiring a small portion of features.

\textbf{Reward Evaluation}
Since we are dealing with a dynamic acquisition scenario, different algorithms or the same algorithm with different hyperparameters, such as $\alpha$, could lead to different acquisitions, which renders the direct comparison difficult. In the experiment section, we compare different algorithms by plotting the performance curve w.r.t. the number of acquisitions. Here, we utilize another evaluation metric that directly compares the returned reward. 
\begin{wraptable}{r}{0.23\textwidth}
\small
\caption{Normalized rewards for MNIST AFA experiments.}
\label{tab:reward}
\begin{tabular}{cc}
\toprule
Algorithm  & Reward  \\
\midrule
GSMRL      & 0.7998  \\
JAFA       & 0.7335  \\
GSM+Greedy & 0.7038  \\
EDDI       & 0.6116  \\
\bottomrule
\end{tabular}
\end{wraptable}
We use a normalized reward for evaluation where for a $d$-dimensional instance, each feature costs $\frac{1}{d}$ and the final classification is rewarded 1 if the prediction is correct, otherwise the reward is zero. The normalized reward is within the rage of $[-1,1]$ where a correct classification with no feature acquired obtains the highest reward 1, a wrong classification with all feature acquired obtains the lowest reward -1, and a correct classification with all features acquired obtains the reward 0. We report the normalized reward for MNIST classification in Table \ref{tab:reward}.

\end{document}